\documentclass{article}

\usepackage{microtype}
\usepackage{graphicx}
\usepackage{subcaption}
\usepackage{booktabs} 

\usepackage{bm}

\usepackage{booktabs}
\usepackage{multirow}
\usepackage{colortbl}
\usepackage{xcolor}

\usepackage{svg}      
\usepackage{graphicx}  

\definecolor{highlight}{gray}{0.93} 
\usepackage{hyperref}

\usepackage[accepted]{icml2026}

\usepackage{amsmath}
\usepackage{amssymb}
\usepackage{mathtools}
\usepackage{amsthm}

\usepackage[capitalize,noabbrev]{cleveref}

\theoremstyle{plain}
\newtheorem{theorem}{Theorem}[section]

\theoremstyle{definition}

\newtheorem{assumption}[theorem]{Assumption}
\theoremstyle{remark}

\usepackage[textsize=tiny]{todonotes}

\icmltitlerunning{Learning Human-Robot Collaboration via Heterogeneous-Agent Lyapunov Policy Optimization}

\begin{document}

\twocolumn[
  \icmltitle{Learning Human-Robot Collaboration via Heterogeneous-Agent\\ Lyapunov Policy Optimization}





  \icmlsetsymbol{equal}{*}
      \begin{icmlauthorlist}
    \icmlauthor{Hao Zhang}{uta,cmu1}
    \icmlauthor{Yaru Niu}{cmu1}
    \icmlauthor{Yikai Wang}{cmu1}
    \icmlauthor{Ding Zhao}{cmu1}
    \icmlauthor{H. Eric Tseng}{uta}
  \end{icmlauthorlist}

  \icmlaffiliation{uta}{ETAIC Lab, Department of Electrical Engineering, University of Texas at Arlington, USA}
  \icmlaffiliation{cmu1}{Safe AI Lab, Department of Mechanical Engineering, Carnegie Mellon University, USA}

  \icmlcorrespondingauthor{H. Eric Tseng}{hongtei.tseng@uta.edu}
  \icmlcorrespondingauthor{Ding Zhao}{dingzhao@andrew.cmu.edu}
  \icmlcorrespondingauthor{Hao Zhang}{haoz4@andrew.cmu.edu}

  \icmlkeywords{Machine Learning, ICML}

  \vskip 0.3in
]



\printAffiliationsAndNotice{}  

\begin{abstract}
To improve generalization and resilience in human–robot collaboration (HRC), robots must handle the combinatorial diversity of human behaviors and contexts, motivating multi-agent reinforcement learning (MARL). However, inherent heterogeneity between robots and humans creates a rationality gap (RG), where decentralized policy updates deviate from cooperative joint optimization. The resulting learning problem is a general-sum differentiable game, so independent policy-gradient updates can oscillate or diverge without added structure. We propose heterogeneous-agent Lyapunov policy optimization (HALO), a framework that stabilizes decentralized MARL by enforcing Lyapunov-based contraction in policy-parameter space. Unlike Lyapunov-based safe RL, which targets state/trajectory constraints in constrained Markov decision processes, HALO uses Lyapunov certification to stabilize decentralized policy learning. HALO rectifies decentralized gradients via optimal quadratic projections, ensuring monotonic contraction of RG and enabling effective exploration of open-ended interaction spaces. Extensive simulations and real-world humanoid-robot experiments show that this certified stability improves generalization and robustness in collaborative corner cases.
\end{abstract}

\section{Introduction}
\label{sec:intro}

Human-robot collaboration (HRC) is a central challenge for embodied intelligence in human environments, requiring robots to achieve task-level coordination with diverse and adaptive human, and potentially robotic, partners. \citep{vinyals2019grandmaster}. Traditional HRC is framed as a single-agent task where the human is treated as a static or perturbed environmental component \citep{foerster2018counterfactual}. Such robot--script or log--replay paradigm relies on simulators with predefined human inputs, failing to capture the stochastic richness
in human behaviors \citep{hadfield2016cooperative}. Consequently, robots often overfit to specific interaction traces \citep{carroll2019utility}, leading to performance collapse when encountering out-of-distribution (OOD) behaviors \citep{samvelyan19smac, strouse2021collaborating}.

To transcend these limits, this work adopts heterogeneous multi-agent reinforcement learning (MARL) for human--robot synergy \citep{oroojlooy2023review}. We argue - and later demonstrated empirically in our experiment - that replacing static scripts with learning-capable humanoid proxies as a computational imperative \citep{haight2025harl}, enabling robots to navigate infinite interaction manifolds \citep{lowe2017multi}. MARL allows adaptive strategies to emerge that are intractable via manual scripting \citep{li2025amo}. This ensures that complex edge cases are captured \citep{hu2020other}, providing a foundation for generalization. However, heterogeneous learning introduces a critical structural pathology known as the rationality gap (RG) 
\citep{mazumdar2019policy}.
In MARL, agents share a team-level objective, but heterogeneity forces each agent to update from its own individual perspective
\citep{rashid2018qmix, yu2022surprising}.
While many prior MARL methods rely on parameter-sharing that collapse the joint parameter space into a shared representation \citep{wen2022multi}, such sharing is infeasible in heterogeneous HRC settings. This mismatch further widens the RG, as individual updates diverge from team-optimal directions
 \citep{son2019qtran}.

Beyond this structural misalignment, decentralized learning also suffers from inherent dynamical instabilities. MARL updates evolve under a non-conservative vector field with a non-symmetric Jacobian, giving rise to rotational dynamics and limit cycles 
\citep{letcher2019stable, balduzzi2018mechanics, pmlr-v80-letcher18a}.
Prior work in differentiable games has proposed several methods to damp or compensate for these rotational forces, including symplectic gradient adjustment, which subtracts the antisymmetric component of the Jacobian to reduce cycling 
\citep{balduzzi2018mechanics}.
Consensus optimization and its variants that regularize gradients toward more potential-like behavior 
\citep{NIPS2017_4588e674}, 
extragradient and optimistic methods that stabilize saddle-point dynamics \citep{gidel2018variational, daskalakis2017training}, 
and opponent-shaping approaches that estimate how an agent’s update will influence others \citep{foerster2018learning}. 
However, these techniques typically assume low-dimensional differentiable games, centralized Jacobian access, or fully modeled opponents, and thus remain difficult to apply in heterogeneous, partially observed, embodied HRC settings. As a result, heterogeneous agents often still “chase’’ one another’s evolving strategies, producing unstable oscillations that prevent convergence to cooperative optima 
\cite{mazumdar2020gradient, fiez2020implicit}
leaving exploration largely tethered to a non-convergent regime 
\citep{yang2024learning}.

Consequently, existing HRC architectures lack a stability kernel capable of neutralizing these non-conservative forces \citep{chow2018lyapunov}. To the best of the authors' knowledge, the integration of MARL-based interaction paradigm with a learning-stability kernel remains an open challenge in the context of HRC \citep{gu2021safe}. Therefore, we introduce heterogeneous-agent Lyapunov policy optimization (HALO), which establishes a formal stability certificate in the policy-parameter space by quantifying coordination disagreement as a Lyapunov potential. By employing an optimal quadratic projection to rectify optimization dynamics, HALO ensures the monotonic contraction of the RG. 

Our contributions are summarized as follows: 1) we propose a learning kernel HALO that enforces stable policy-parameter updates via an optimal quadratic projection, yielding a formal stability certificate in parameter space; 2) We establish theoretical guarantees, proving monotonic contraction of the rationality gap under HALO using nonlinear stability analysis \citep{khalil2002nonlinear}; 3) we demonstrate HALO across diverse HRC tasks and formalize why autonomous exploration with HALO is necessary to avoid the OOD brittleness of scripted HRC.

\section{Related Work} \label{sec:related_work}

\textbf{Learning paradigms for HRC.} Conventional HRC treats humans as reactive environment components via predefined scripts \citep{jaderberg2019human}, limiting coordination to finite interaction patterns \citep{foerster2018counterfactual}. Such single-agent formulations or imitation learning fail to generalize to non-stationary human behaviors \citep{vinyals2019grandmaster, raileanu2018modeling}. Therefore, the transition to co-adaptation is imperative for handling latent human intentions \citep{shum2019theory}. This work circumvents this by replacing scripts with learning-capable humanoid agents, and using MARL to force the robot to internalize a broader distribution of coordination patterns \citep{strouse2021collaborating}. 

\textbf{Stability in MARL.} MARL instability stems from differentiable game dynamics, where non-symmetric Jacobians and solenoidal vector fields induce rotational behaviors that obstruct convergence \citep{balduzzi2018mechanics, letcher2019stable, mazumdar2019policy}. Centralized training with decentralized execution (CTDE) methods address this via value factorization \citep{rashid2018qmix, son2019qtran, wang2021qplex} or trust-region heuristics \citep{gu2021multi}, yet they regularize update magnitudes rather than geometric directions. In contrast, our HALO algorithm analyzes the Lyapunov descent condition to neutralize the cyclic divergence in heterogeneous gradients \citep{fiez2020implicit}. 

\textbf{Lyapunov methods in RL.} In safe RL, Lyapunov functions are used as certificates to enforce constraint-satisfaction conditions during learning \citep{chow2018lyapunov}, sometimes augmented by additional safety tools, including barrier function \citep{sikchi2021lyapunov}.
More broadly, Lyapunov-based tools have been used to ensure stability of learned dynamics models 
\citep{manek2019learning}
and to infer stability certificates directly from data, extending a long tradition in nonlinear systems analysis \citep{boffi2021learning}.
Despite these advancements, there is little exploration of using Lyapunov functions to directly certify the stability of policy-parameter learning dynamics in MARL \citep{pmlr-v162-ding22a}. This work moves in this direction by applying Lyapunov principles on policy-parameter space, inducing a contracting potential even under non-stationary updates.

\textbf{Gradient alignment and geometry.} Geometric heuristics like PCGrad \citep{yu2020gradient} mitigate conflicts by projecting gradients with negative similarity, yet they lack global invariants over the learning trajectory. More robust geometric approaches involve Riemann-Finsler metrics \citep{pmlr-v162-wang22ai} or mirror descent on the simplex \citep{shani2020adaptive}.  
Frameworks like heterogeneous mirror learning provide unified convergence guarantees for multi-objective settings 
\citep{zhong2022heterogeneous}. 
Our HALO extends geometric intuitions toward a Lyapunov-based perspective on learning dynamics, using stability principles to formalize contraction properties that promote coordination. The full algorithmic details appear in the following section.

\begin{figure*}[t]
    \centering
    \includegraphics[width=\textwidth]{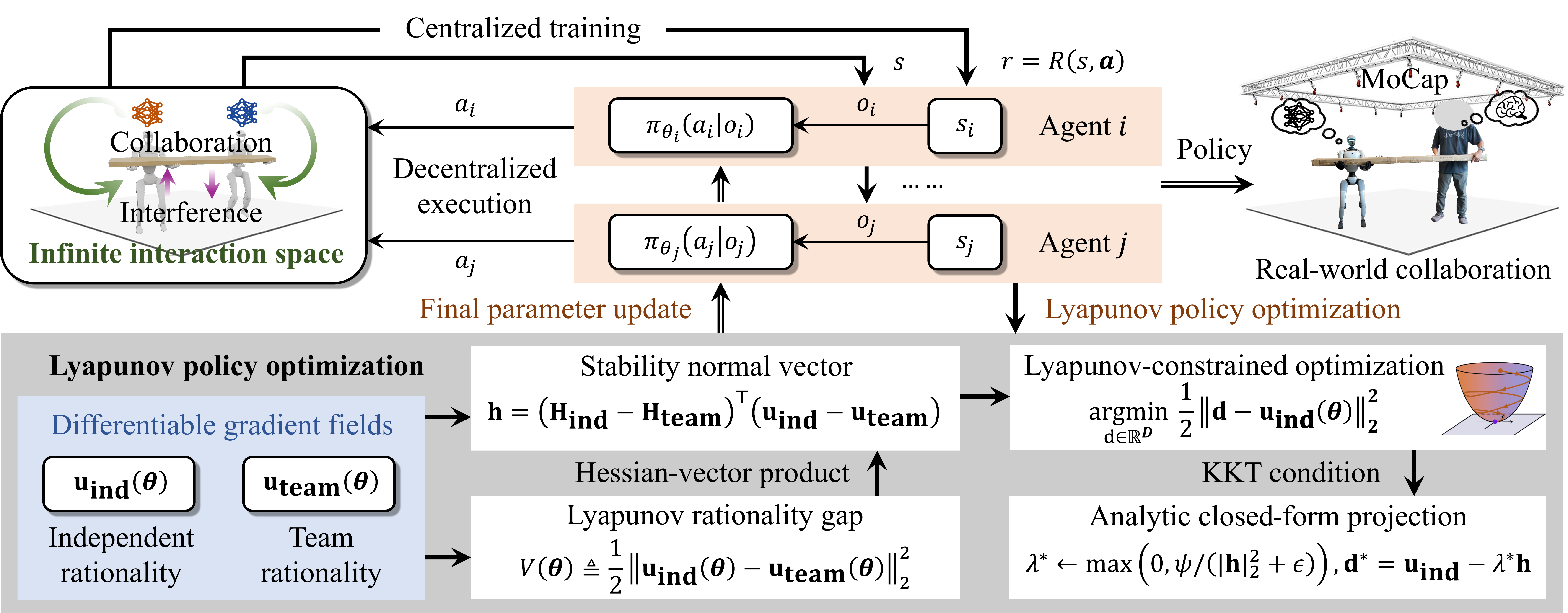}
    \caption{The HALO framework architecture combining the transition from standard decentralized learning to Lyapunov policy optimization for real-world HRC. Key components include the computation of the rationality gap $V(\theta)$ and the stability normal vector $h$ to derive the final analytic closed-form projection $d^*$.}
    \label{fig:HALO_architecture}
\end{figure*}

\section{Preliminaries}
\label{sec:preliminaries}

\subsection{Decentralized POMDPs}
HRC tasks use a decentralized partially observable Markov decision process (POMDP)  $\mathcal{M} = \langle \mathcal{S}, \mathcal{A}, P, R, \gamma, N, \mathcal{O}, Z \rangle$ \citep{foerster2018counterfactual}. Unlike parameter sharing, heterogeneous agents rely on independent policies $\pi_{\theta_i}(a_{i,t} | o_{i,t})$ given local observations $o_{i,t} = Z(s_t, i)$. The parameter vector is defined as $\bm{\theta} = [\theta_1^\top, \dots, \theta_N^\top]^\top \in \mathbb{R}^D$. Agents share a global reward $r_t = R(s_t, \mathbf{a}_t)$ and the objective is to maximize the return $J(\bm{\theta}) = \mathbb{E}_{\mathbf{a}_t \sim \bm{\pi}_{\bm{\theta}}, s_t \sim P} [ \sum_{t=0}^\infty \gamma^t R(s_t, \mathbf{a}_t) ]$.

\subsection{Decoupled CTDE and the stationarity assumption}
Under the CTDE paradigm \cite{yu2020gradient}, each agent independently updates its parameters for heterogeneous embodiments, where $\hat{A}_{tot}$ is a centralized advantage estimator: $\nabla_{\theta_i} J_i(\theta_i) = \mathbb{E} \left[ \nabla_{\theta_i} \log \pi_{\theta_i}(a_i | o_i) \hat{A}_{tot}(s, \mathbf{a}) \right]$. 
The concatenation of these updates forms an independent rationality field $[\nabla_{\theta_1}J_1^\top,\dots,\nabla_{\theta_N}J_N^\top]^\top$, computed as if the partner’s policies $\pi_{\theta_{-i}}$ were part of a fixed environment component. This implicitly assumes partner stationarity.

\subsection{Learning dynamics and rationality gap}
A fundamental pathology in decoupled architectures is that $\mathbf{u}_{\text{ind}}(\bm{\theta})$ constitutes a non-conservative vector field \citep{balduzzi2018mechanics}. Because agent parameters are independently updated, the joint Jacobian is non-symmetric, $\nabla_{\theta_j} \nabla_{\theta_i} J_i \neq \nabla_{\theta_i} \nabla_{\theta_j} J_j$, inducing rotational components that lead to limit cycles and divergent trajectories \citep{letcher2019stable}. The structural mismatch in HRC creates a rationality gap between the decentralized update directions and the true team-level ascent direction $\nabla J(\bm{\theta})$.

\section{Methodology: The HALO Framework}
\label{sec:method}

We design a stability-aware control law that rectifies decentralized gradients to satisfy a convergence certificate, as illustrated in Fig.~\ref{fig:HALO_architecture}.

\subsection{Vector field misalignment and Lyapunov stability}

Let $\bm{\theta} = [\theta_1^\top, \dots, \theta_N^\top]^\top \in \mathbb{R}^D$ represent the joint parameter vector of $N$ heterogeneous agents. In the CTDE paradigm with decoupled architectures \cite{letcher2019stable}, the learning dynamics are governed by the interaction between local agent intentions and the global team objective. We formalize this interaction via two competing vector fields:

1. The independent rationality field ($\mathbf{u}_{\text{ind}}$): This field is formed by the concatenation of individual actor gradients. For each agent $i$, the update is driven by a local surrogate $J_i(\theta_i) = \mathbb{E}_{a_i \sim \pi_{\theta_i}} [Q_{\text{tot}}(s, \mathbf{a})]$, which assumes other agents' policies are momentarily stationary:
\begin{equation}
    \mathbf{u}_{\text{ind}}(\bm{\theta}) \triangleq [\nabla_{\theta_1} J_1^\top, \dots, \nabla_{\theta_N} J_N^\top]^\top \in \mathbb{R}^D.
\end{equation}

2. The team rationality field ($\mathbf{u}_{\text{team}}$): This represents the true ascent direction of the global team reward function $J(\bm{\theta}) = \mathbb{E}_{\mathbf{a} \sim \bm{\pi}_{\bm{\theta}}} [ \sum_t \gamma^t r_t ]$. Under the chain rule in the joint parameter space, it defines the team rationality field:
\begin{equation}
    \mathbf{u}_{\text{team}}(\bm{\theta}) \triangleq \nabla_{\bm{\theta}} J(\bm{\theta}) = \left[ \frac{\partial J}{\partial \theta_1}^\top, \dots, \frac{\partial J}{\partial \theta_N}^\top \right]^\top \in \mathbb{R}^D.
\end{equation}

In this formulation, we define the rationality gap, the Variational mismatch between decentralized best-response dynamics and centralized cooperative dynamics, via a Lyapunov candidate function as the discrepancy:
\begin{equation}
    \label{eq:lyapunov_def}
    V(\bm{\theta}) \triangleq \frac{1}{2} \| \mathbf{u}_{\text{ind}}(\bm{\theta}) - \mathbf{u}_{\text{team}}(\bm{\theta}) \|_2^2.
\end{equation}

Structural pathology: In heterogeneous MARL, $\mathbf{u}_{\text{ind}}$ is generally non-conservative \citep{balduzzi2018mechanics}. With decoupled parameters, the Jacobian $\mathbf{H}_{\text{ind}} \triangleq \nabla_{\bm{\theta}} \mathbf{u}_{\text{ind}}$ is non-symmetric, as cross-terms $\nabla_{\theta_j} \nabla_{\theta_i} J_i$ and $\nabla_{\theta_i} \nabla_{\theta_j} J_j$ differ. By Helmholtz decomposition, $\mathbf{u}_{\text{ind}} = \nabla \Phi + \mathbf{\Psi}$, where the solenoidal component $\mathbf{\Psi}$ drives limit cycles and oscillations \citep{letcher2019stable}. $V(\bm{\theta})$ monitors this dissonance and our control objective is to design an update $\mathbf{d}$ realizing
\begin{equation}
    \langle \nabla_{\bm{\theta}} V, \mathbf{d} \rangle \le -\sigma V(\bm{\theta}), \quad \sigma > 0,
\end{equation}
enforcing a Lyapunov dissipation constraint for asymptotic contraction of the rationality gap and stabilizing decentralized learning.

\subsection{Structural stability and analytic projection}

Standard decentralized updates $\bm{\theta}_{k+1} = \bm{\theta}_k + \eta \mathbf{u}_{\text{ind}}$ prioritize local greedy progress but frequently increase the rationality gap $V(\bm{\theta})$ \citep{leonardos2021global}. To ensure structural stability, we seek an optimal update direction $\mathbf{d}^*$ that strictly satisfies a Lyapunov stability certificate \citep{yang2020projection}. This is formulated as a constrained quadratic program:
\begin{equation}
\label{eq:optimization_problem}
\begin{aligned}
    \min_{\mathbf{d} \in \mathbb{R}^D} \quad & \frac{1}{2} \| \mathbf{d} - \mathbf{u}_{\text{ind}}(\bm{\theta}_k) \|_2^2 \\
    \text{s.t.} \quad & \langle \nabla_{\bm{\theta}} V(\bm{\theta}_k), \mathbf{d} \rangle \le -\sigma V(\bm{\theta}_k).
\end{aligned}
\end{equation}

Eq.~\eqref{eq:optimization_problem} performs a minimum-norm projection of $\mathbf{u}_{\text{ind}}$ onto the stability half-space $\mathcal{H}_{\text{stable}} = \{ \mathbf{d} \in \mathbb{R}^D \mid \nabla V^\top \mathbf{d} \le -\sigma V \}$. This functions as a structural stability certificate based on decentralized gradients. The Karush-Kuhn-Tucker (KKT) conditions permit an exact analytic solution \citep{CLEMPNER2016135}. Let $\mathbf{h} \triangleq \nabla_{\bm{\theta}} V(\bm{\theta}_k)$ denote the gradient of the disagreement and we define the Lagrangian as:
\begin{equation}
\label{eq:lagrangian}
    \mathcal{L}(\mathbf{d}, \lambda) = \frac{1}{2} \| \mathbf{d} - \mathbf{u}_{\text{ind}} \|_2^2 + \lambda \left( \mathbf{h}^\top \mathbf{d} + \sigma V \right),
\end{equation}
where $\lambda$ is the dual variable. For optimal update $\mathbf{d}^*$, the stationarity condition $\nabla_{\mathbf{d}} \mathcal{L} = 0$, combined with the primal feasibility $\mathbf{h}^\top \mathbf{d}^* + \sigma V \le 0$ and complementary slackness $\lambda (\mathbf{h}^\top \mathbf{d}^* + \sigma V) = 0$, reveals the structure $\mathbf{d}^* = \mathbf{u}_{\text{ind}} - \lambda \mathbf{h}$. To determine the optimal multiplier $\lambda^*$, we substitute the stationarity condition into the slackness equation:
\begin{equation}
\label{eq:lambda_derivation}
\begin{aligned}
    & \lambda \left( \mathbf{h}^\top (\mathbf{u}_{\text{ind}} - \lambda \mathbf{h}) + \sigma V \right) = 0 \\
    & \implies \lambda ( \mathbf{h}^\top \mathbf{u}_{\text{ind}} - \lambda \|\mathbf{h}\|_2^2 + \sigma V ) = 0.
\end{aligned}
\end{equation}

Solving Eq.~\eqref{eq:lambda_derivation} identifies two operational regimes: an inactive regime where $\mathbf{h}^\top \mathbf{u}_{\text{ind}} + \sigma V \le 0$ (yielding $\lambda^* = 0$). Unifying these cases via the rectifier function yields the HALO projection operator, where $\epsilon$ is a damping constant:
\begin{equation}
    \label{eq:closed_form}
    \mathbf{d}^* = \mathbf{u}_{\text{ind}} - \max\left(0, \frac{\langle \mathbf{h}, \mathbf{u}_{\text{ind}} \rangle + \sigma V}{\| \mathbf{h} \|_2^2 + \epsilon}\right) \mathbf{h}
\end{equation}

\begin{algorithm}[t]
\caption{HALO practical implementation}
\label{alg:HALO_practical}
\begin{algorithmic}[1]
    \REQUIRE Initial joint policy $\bm{\theta}_0$, critic $Q_{\phi}$, hyper-parameters $\eta, \sigma, \epsilon$
    \FOR{iteration $k=0, 1, \dots$}
        \STATE Sample mini-batch $\mathcal{D} \sim \bm{\pi}_{\bm{\theta}_k}$
        \STATE \COMMENT{\textit{Step 1: Compute differentiable gradient fields}}
        \STATE $\mathbf{u}_{\text{ind}} \leftarrow \nabla_{\bm{\theta}} \mathcal{L}_{\text{ind}}|_{\bm{\theta}_k}$ with \texttt{create\_graph=True}
        \STATE $\mathbf{u}_{\text{team}} \leftarrow \nabla_{\bm{\theta}} \mathcal{L}_{\text{team}}|_{\bm{\theta}_k}$ with \texttt{create\_graph=True}
        
        \STATE \COMMENT{\textit{Step 2: Obtain stability normal $\mathbf{h}$ via HVP}}
        \STATE $V \leftarrow \frac{1}{2} \| \mathbf{u}_{\text{ind}} - \mathbf{u}_{\text{team}} \|_2^2$
        \STATE $\mathbf{h} \leftarrow \nabla_{\bm{\theta}} V |_{\bm{\theta}_k}$ \hfill $\triangleright$ \textit{Double back-prop pass}
        
        \STATE \COMMENT{\textit{Step 3: Stability-constrained projection}}
        \STATE $\psi \leftarrow \langle \mathbf{h}, \text{detach}(\mathbf{u}_{\text{ind}}) \rangle + \sigma \cdot \text{detach}(V)$
        \STATE $\lambda^* \leftarrow \max\left(0, \psi / (\|\mathbf{h}\|_2^2 + \epsilon)\right)$
        
        \STATE \COMMENT{\textit{Step 4: Update parameters and critic}}
        \STATE $\mathbf{d}^* \leftarrow \text{detach}(\mathbf{u}_{\text{ind}}) - \lambda^* \mathbf{h}$
        \STATE $\bm{\theta}_{k+1} \leftarrow \bm{\theta}_k + \eta \mathbf{d}^*$
        \STATE Update $\phi$ by minimizing $\mathcal{L}_{\text{MSE}}(Q_{\text{tot}}, y)$
    \ENDFOR
\end{algorithmic}
\end{algorithm}

\subsection{Scalability via Hessian-vector product}
\label{sec:implementation}

A primary concern regarding Lyapunov-based optimization is the perceived second-order complexity. The vector $\mathbf{h} = \nabla_{\bm{\theta}} V$ requires differentiating through the gradient fields, involving a Jacobian-vector product:
\begin{equation}
\label{eq:h_derivation}
\begin{aligned}
    \mathbf{h} &= \nabla_{\bm{\theta}} \left( \frac{1}{2} \| \mathbf{u}_{\text{ind}} - \mathbf{u}_{\text{team}} \|_2^2 \right) \\
    &= \left( \mathbf{H}_{\text{ind}} - \mathbf{H}_{\text{team}} \right)^\top (\mathbf{u}_{\text{ind}} - \mathbf{u}_{\text{team}}),
\end{aligned}
\end{equation}
where $\mathbf{H}$ denotes the Jacobian of the respective vector fields. While explicit $\mathcal{O}(D^2)$ Hessian construction is intractable, HALO leverages double back-propagation to compute Eq.~\eqref{eq:h_derivation} as a Hessian-vector product (HVP). This procedure, detailed in \textbf{Algorithm~\ref{alg:HALO_practical}}, by retaining the computational graph, the backward pass on $V$ yields the required product without ever materializing the full Hessian matrix. The detailed step-by-step derivation is provided in \textbf{Appendix~\ref{app:stability_gradient_detail}}.

\section{Theoretical Analysis}
\label{sec:theory}

\begin{assumption}[Regularity and smoothness]
\label{ass:smoothness}
The team objective $J(\bm{\theta})$ is $C^2$-continuous and the Lyapunov potential $V(\bm{\theta})$ is $L$-smooth on the parameter manifold $\Theta$. That is, $\|\nabla_{\bm{\theta}} V(\bm{\theta}_1) - \nabla_{\bm{\theta}} V(\bm{\theta}_2)\|_2 \le L \|\bm{\theta}_1 - \bm{\theta}_2\|_2$ for all $\bm{\theta}_1, \bm{\theta}_2 \in \Theta$.
\end{assumption}

\begin{figure*}[t]
    \centering
    \begin{subfigure}[b]{0.525\textwidth}
        \centering
        \includegraphics[width=\linewidth]{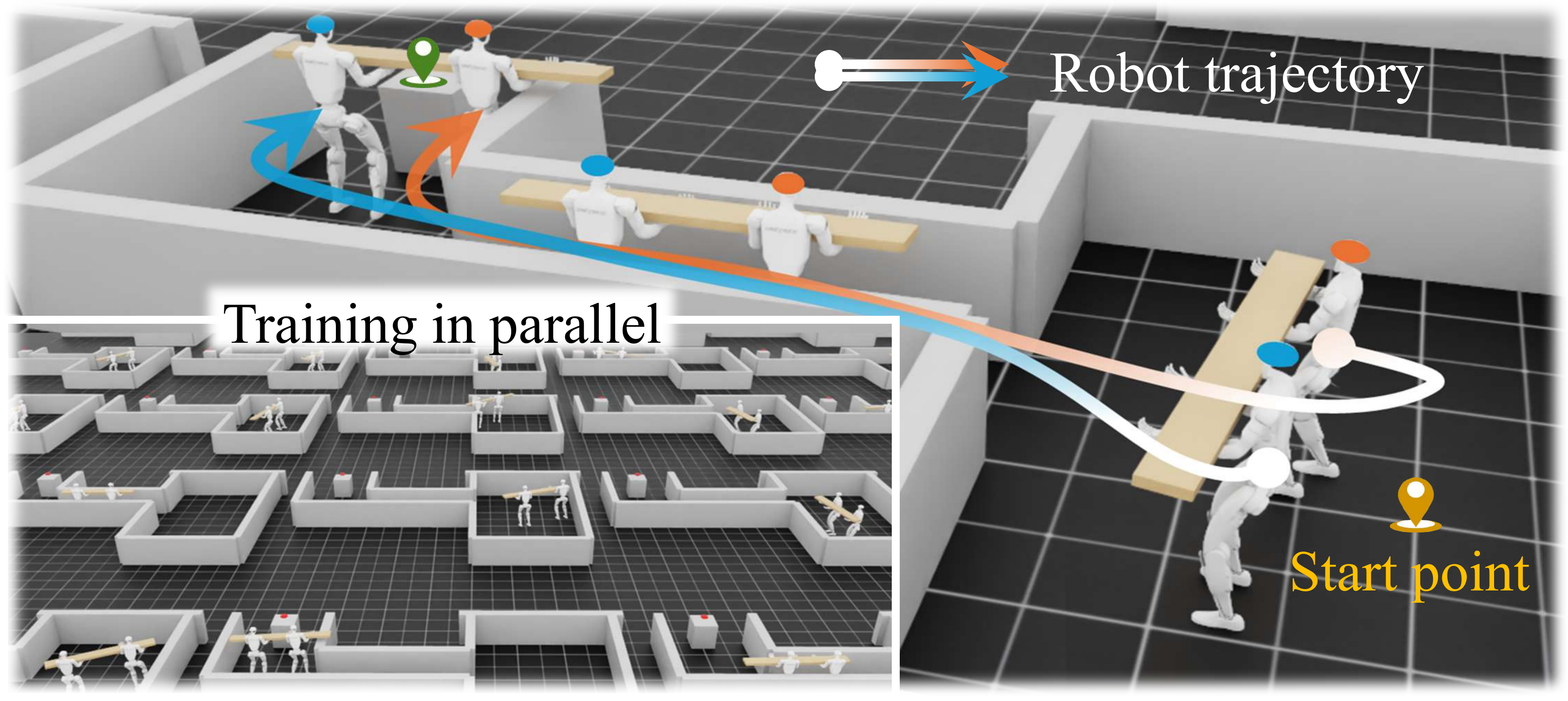}
        \caption{Simulation infrastructure and task snapshots.}
        \label{fig:sim_env}
    \end{subfigure}
    \hfill
    \begin{subfigure}[b]{0.45\textwidth}
        \centering
        \includegraphics[width=\linewidth]{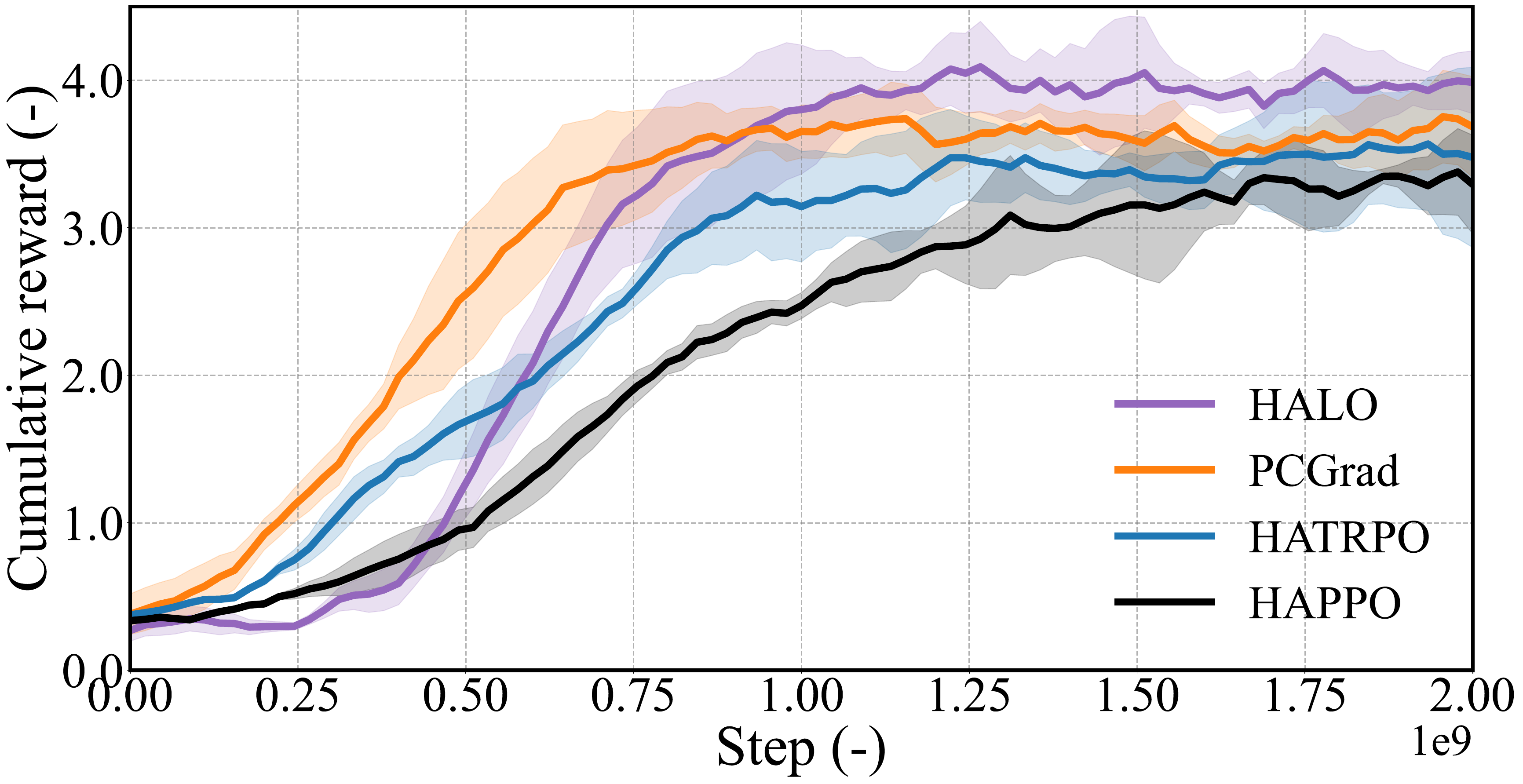}
        \caption{Learning dynamics (cumulative reward).}
        \label{fig:episode_return}
    \end{subfigure}
    
    \caption{Simulation benchmark and learning dynamics: (a) massively parallelized training infrastructure in Isaac Lab, where the arrows indicate the emergent synergy collaboration; (b) performance comparison across nine scenarios, where HALO demonstrates significantly faster convergence, reaching its performance plateau at approximately 1.3B steps.}
    \label{fig:combined_results}
\end{figure*}

\subsection{Monotonic descent of the rationality gap}

HALO transforms a potentially oscillatory decentralized learning process into a dissipative dynamical system.

\begin{theorem}[Monotonicity of potential decay]
\label{thm:monotonicity}
Under Assumption \ref{ass:smoothness}, let $\{ \bm{\theta}_k \}_{k=0}^\infty$ be the sequence of parameters generated by the HALO update law. If the learning rate $\eta$ satisfies the stability bound $\eta \le 2\sigma V(\bm{\theta}_k) / (L \| \mathbf{d}^*_k \|_2^2)$, then the rationality gap $V(\bm{\theta})$ is monotonically non-increasing:
\begin{equation}
    V(\bm{\theta}_{k+1}) - V(\bm{\theta}_k) \le -\eta \sigma V(\bm{\theta}_k) + \frac{L\eta^2}{2} \| \mathbf{d}^*_k \|_2^2.
\end{equation}
\end{theorem}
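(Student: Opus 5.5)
The argument is the standard descent lemma for $L$-smooth functions, combined with the fact that $\mathbf{d}^*_k$ satisfies the Lyapunov dissipation constraint. By Assumption~\ref{ass:smoothness}, $\nabla_{\bm{\theta}} V$ is $L$-Lipschitz. Write $\bm{\theta}_{k+1} = \bm{\theta}_k + \eta \mathbf{d}^*_k$ and expand $V$ along the segment $\bm{\theta}_k + t\eta\mathbf{d}^*_k$, $t\in[0,1]$, using the fundamental theorem of calculus:
\begin{equation*}
V(\bm{\theta}_{k+1}) - V(\bm{\theta}_k) = \eta\langle \mathbf{h}_k, \mathbf{d}^*_k\rangle + \eta\int_0^1 \langle \nabla V(\bm{\theta}_k + t\eta\mathbf{d}^*_k) - \mathbf{h}_k, \mathbf{d}^*_k\rangle\, dt .
\end{equation*}
By Cauchy--Schwarz and the Lipschitz bound, the integral term is at most $\eta\int_0^1 L t\eta\|\mathbf{d}^*_k\|_2^2\,dt = \frac{L\eta^2}{2}\|\mathbf{d}^*_k\|_2^2$. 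Here $\mathbf{h}_k = \nabla V(\bm{\theta}_k)$.

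Next I bound the first-order term $\langle \mathbf{h}_k, \mathbf{d}^*_k\rangle \le -\sigma V(\bm{\theta}_k)$ by splitting on the rectifier in Eq.~\eqref{eq:closed_form}. In the inactive case, $\lambda^*=0$ and $\mathbf{d}^*_k = \mathbf{u}_{\text{ind}}$, and the condition $\langle \mathbf{h}_k,\mathbf{u}_{\text{ind}}\rangle + \sigma V \le 0$ is exactly the claim. In the active case, set $\psi = \langle \mathbf{h}_k,\mathbf{u}_{\text{ind}}\rangle + \sigma V > 0$. Then
\begin{equation*}
\langle \mathbf{h}_k,\mathbf{d}^*_k\rangle = \langle \mathbf{h}_k,\mathbf{u}_{\text{ind}}\rangle - \psi\,\frac{\|\mathbf{h}_k\|_2^2}{\|\mathbf{h}_k\|_2^2+\epsilon} .
\end{equation*}
When $\epsilon = 0$ (and $\mathbf{h}_k\neq 0$), this equals $-\sigma V$ exactly. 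This is the KKT solution of Eq.~\eqref{eq:optimization_problem} projected onto the half-space boundary.

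This step is the main obstacle. With damping $\epsilon>0$, the certificate is only met up to a residual $\psi\epsilon/(\|\mathbf{h}_k\|_2^2+\epsilon)$. I would therefore state the proof for the exact projection, $\epsilon\to 0$, i.e.\ for the QP solution $\mathbf{d}^*_k$ of Eq.~\eqref{eq:optimization_problem}, which by construction lies in $\mathcal{H}_{\text{stable}}$. I would remark that for $\epsilon>0$ the same inequality holds with an extra additive term $\eta\psi\epsilon/(\|\mathbf{h}_k\|_2^2+\epsilon)$. Separately, the degenerate case $\mathbf{h}_k=0$ with $V>0$ makes the half-space empty. I would exclude it by assuming feasibility of Eq.~\eqref{eq:optimization_problem}, which is implicit in the definition of the HALO update.

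Combining the two bounds gives
\begin{equation*}
V(\bm{\theta}_{k+1}) - V(\bm{\theta}_k) \le -\eta\sigma V(\bm{\theta}_k) + \frac{L\eta^2}{2}\|\mathbf{d}^*_k\|_2^2 ,
\end{equation*}
which is the displayed inequality. For monotonicity, factor the right-hand side as $\eta\bigl(-\sigma V(\bm{\theta}_k) + \tfrac{L\eta}{2}\|\mathbf{d}^*_k\|_2^2\bigr)$. The step-size condition $\eta \le 2\sigma V(\bm{\theta}_k)/(L\|\mathbf{d}^*_k\|_2^2)$ makes the bracket nonpositive, so $V(\bm{\theta}_{k+1})\le V(\bm{\theta}_k)$. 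If $\mathbf{d}^*_k=0$, the iterate does not move and the claim is trivial.
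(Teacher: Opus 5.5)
Your proof is correct and follows the paper's route. Like the appendix, you apply the $L$-smoothness descent lemma to $\bm{\theta}_{k+1}=\bm{\theta}_k+\eta\mathbf{d}^*_k$, use the KKT case split to get $\langle \mathbf{h}_k,\mathbf{d}^*_k\rangle\le-\sigma V(\bm{\theta}_k)$, and then use the step-size bound to make the right-hand side nonpositive.

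Your caveats are valid and more careful than the paper, which glosses over both points. The appendix derives the active-case multiplier with denominator $\|\mathbf{h}\|_2^2$, i.e.\ $\epsilon=0$. It then inserts $\epsilon$ into the unified formula without accounting for your residual $\psi\epsilon/(\|\mathbf{h}_k\|_2^2+\epsilon)$. It also never addresses the infeasible case $\mathbf{h}_k=0$, $V(\bm{\theta}_k)>0$.
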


\begin{proof}[Summary]
The proof utilizes the descent Lemma for $L$-smooth functions. By solving the KKT conditions for the stability-constrained projection, we show that the update direction $\mathbf{d}^*_k$ always maintains a dissipative inner product with the stability gradient, $\langle \nabla V, \mathbf{d}^* \rangle \le -\sigma V$. The detailed algebraic derivation is provided in \textbf{Appendix~\ref{app:projection_derivation}}.
\end{proof}

\subsection{Asymptotic convergence to equilibrium}

Beyond local stability, we establish that HALO drives the multi-agent system toward a state of rationality agreement.

\begin{theorem}[Convergence to the synergy manifold]
\label{thm:convergence}
Suppose $V(\bm{\theta})$ is bounded below by 0 and the learning rate $\{\eta_k\}$ satisfies the Robbins-Monro conditions ($\sum \eta_k = \infty, \sum \eta_k^2 < \infty$). Then, the sequence of disagreement energies $\{V(\bm{\theta}_k)\}_{k=0}^\infty$ converges to zero. Consequently:
\begin{equation}
    \lim_{k \to \infty} \| \mathbf{u}_{\text{ind}}(\bm{\theta}_k) - \mathbf{u}_{\text{team}}(\bm{\theta}_k) \|_2 = 0.
\end{equation}
\end{theorem}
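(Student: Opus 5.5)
We start from the one-step inequality of Theorem~\ref{thm:monotonicity}, now with a variable step size $\eta_k$:
\begin{equation*}
V(\bm{\theta}_{k+1}) \le V(\bm{\theta}_k) - \eta_k \sigma V(\bm{\theta}_k) + \frac{L\eta_k^2}{2}\|\mathbf{d}^*_k\|_2^2 .
\end{equation*}
This follows from the descent lemma (Assumption~\ref{ass:smoothness}) together with the feasibility $\langle \mathbf{h}_k, \mathbf{d}^*_k\rangle \le -\sigma V(\bm{\theta}_k)$ of the HALO projection. Strictly, feasibility holds only up to the damping $\epsilon$. We therefore either take $\epsilon\to 0$ or absorb the resulting $O(\epsilon)$ slack into the error term.

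The argument then reduces to a deterministic Robbins--Siegmund-type lemma. We need one extra ingredient not written in the statement: a uniform bound $\|\mathbf{d}^*_k\|_2^2 \le G^2$. We would justify it either by assuming bounded iterates in $\Theta$ (so that $\mathbf{u}_{\text{ind}}$ and $\mathbf{h}$ are bounded by continuity, using that $J$ is $C^2$) or by bounding $\|\mathbf{d}^*_k\| \le \|\mathbf{u}_{\text{ind}}\| + \lambda^*_k\|\mathbf{h}_k\|$ with $\lambda^*_k\|\mathbf{h}_k\| \le (|\langle \mathbf{h}_k,\mathbf{u}_{\text{ind}}\rangle| + \sigma V)/\|\mathbf{h}_k\|$. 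The second route still needs $\|\mathbf{h}_k\|$ bounded away from zero or the $\epsilon$-regularization, so we would adopt bounded gradients as a standing assumption.

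With the bound in hand, set $a_k = V(\bm{\theta}_k)\ge 0$. Then $a_{k+1} \le (1-\sigma\eta_k) a_k + c\,\eta_k^2$ with $c = LG^2/2$.

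\emph{Step 1 (convergence of $a_k$).} Since $\sum \eta_k^2<\infty$, the sequence $b_k = a_k + c\sum_{j\ge k}\eta_j^2$ is nonincreasing and bounded below by $0$. Hence $b_k$ converges, and since the tail sum vanishes, $a_k \to a^\ast \ge 0$.

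\emph{Step 2 (summability).} Summing the recursion gives $\sigma\sum_k \eta_k a_k \le a_0 + c\sum_k\eta_k^2 < \infty$.

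\emph{Step 3 (the limit is zero).} If $a^\ast>0$, then $a_k \ge a^\ast/2$ for all large $k$. Combined with $\sum\eta_k=\infty$, this forces $\sum \eta_k a_k=\infty$, contradicting Step 2. So $a^\ast=0$. Alternatively, one can use the standard lemma that $a_{k+1}\le(1-\alpha_k)a_k+\beta_k$ with $\sum\alpha_k=\infty$ and $\beta_k/\alpha_k\to0$ implies $a_k\to0$.

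The final claim follows immediately from \eqref{eq:lyapunov_def}: $\|\mathbf{u}_{\text{ind}}-\mathbf{u}_{\text{team}}\|_2=\sqrt{2V(\bm{\theta}_k)}\to0$.

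\emph{Main obstacle.} The difficulty is not the sequence argument but the uniform bound on $\|\mathbf{d}^*_k\|$, together with the fact that the HALO constraint is stated with $\epsilon>0$. Near points where $\mathbf{h}_k=0$ but $V>0$ (stationary points of $V$ with a positive gap), the constraint $\langle\mathbf{h},\mathbf{d}\rangle\le-\sigma V$ is infeasible. The projection then cannot certify dissipation, and the theorem would fail. We would therefore first try to handle this by adding a Polyak--\L{}ojasiewicz-type condition $\|\nabla V\|^2\ge \mu V$, or by assuming $\|\mathbf{h}_k\|$ is bounded below whenever $V>0$. That condition also yields the boundedness of $\lambda^*_k\|\mathbf{h}_k\|$. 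We would state it explicitly as part of the hypotheses implicit in ``the HALO update law.''
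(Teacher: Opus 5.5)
Your proof is correct under the hypotheses you make explicit, and its skeleton matches the paper's. You sum the descent inequality, invoke a uniform bound $\|\mathbf{d}^*_k\|_2\le G$ (which the paper also introduces only inside its proof, not in the statement), conclude $\sigma\sum_k\eta_k V(\bm{\theta}_k)<\infty$, and let $\sum_k\eta_k=\infty$ force the limit to zero. The real difference is the step from $\liminf_k V(\bm{\theta}_k)=0$ to $\lim_k V(\bm{\theta}_k)=0$. The paper appeals to ``monotonicity and uniform continuity of the update.'' But monotonicity follows from Theorem~\ref{thm:monotonicity} only under $\eta_k\le 2\sigma V(\bm{\theta}_k)/(L\|\mathbf{d}^*_k\|_2^2)$. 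That is not a hypothesis here, and a Robbins--Monro schedule fixed in advance need not satisfy it once the gap becomes small. Your quasi-monotone sequence $b_k=a_k+c\sum_{j\ge k}\eta_j^2$ gets convergence of $V(\bm{\theta}_k)$ from the descent inequality and $V\ge 0$ alone, so your route closes a step the paper leaves unjustified. You are also right that the descent inequality presupposes exact feasibility $\langle\mathbf{h}_k,\mathbf{d}^*_k\rangle\le-\sigma V(\bm{\theta}_k)$, which is impossible when $\mathbf{h}_k=0<V(\bm{\theta}_k)$. The paper assumes this away silently. A condition such as $\|\nabla V\|_2^2\ge\mu V$ restores feasibility and, with $\epsilon=0$, also gives $\lambda^*_k\|\mathbf{h}_k\|_2\le\|\mathbf{u}_{\text{ind}}(\bm{\theta}_k)\|_2+\sigma\sqrt{V(\bm{\theta}_k)/\mu}$, as you claim.

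One correction concerns the damping $\epsilon$: of your two options, only $\epsilon=0$ works (or $\epsilon_k\to 0$ with the slack tracked explicitly). When the constraint is active, $\langle\mathbf{h}_k,\mathbf{d}^*_k\rangle+\sigma V(\bm{\theta}_k)=\psi_k\epsilon/(\|\mathbf{h}_k\|_2^2+\epsilon)$, where $\psi_k=\langle\mathbf{h}_k,\mathbf{u}_{\text{ind}}\rangle+\sigma V(\bm{\theta}_k)>0$. This slack enters the recursion multiplied by $\eta_k$, not $\eta_k^2$. It therefore need not be summable under Robbins--Monro and cannot be absorbed into the $c\eta_k^2$ term. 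Moreover, wherever $\|\mathbf{h}_k\|_2^2\ll\epsilon$, the projection removes only the fraction $\|\mathbf{h}_k\|_2^2/(\|\mathbf{h}_k\|_2^2+\epsilon)$ of the violation $\psi_k$, so essentially no dissipation is certified there. With a fixed $\epsilon>0$, the argument yields at best convergence to an $\epsilon$-dependent neighbourhood, not $V(\bm{\theta}_k)\to 0$.
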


\begin{proof}[Summary]
We construct a summable sequence of the potential energies and apply the monotone convergence theorem. The divergence of $\sum \eta_k$ ensures that the Rationality Gap must vanish asymptotically. The convergence $V \to 0$ implies that the limit points of HALO are stationary points where decentralized preferences $\nabla_{\theta_i} J_i$ are aligned with global team ascent directions. The complete measure-theoretic treatment is provided in \textbf{Appendix~\ref{app:convergence_analysis}}.
\end{proof}

    \section{Experiments and Results}
    \label{sec:experiments}

\begin{figure}[t]
    \centering
    \includegraphics[width=0.48\textwidth]{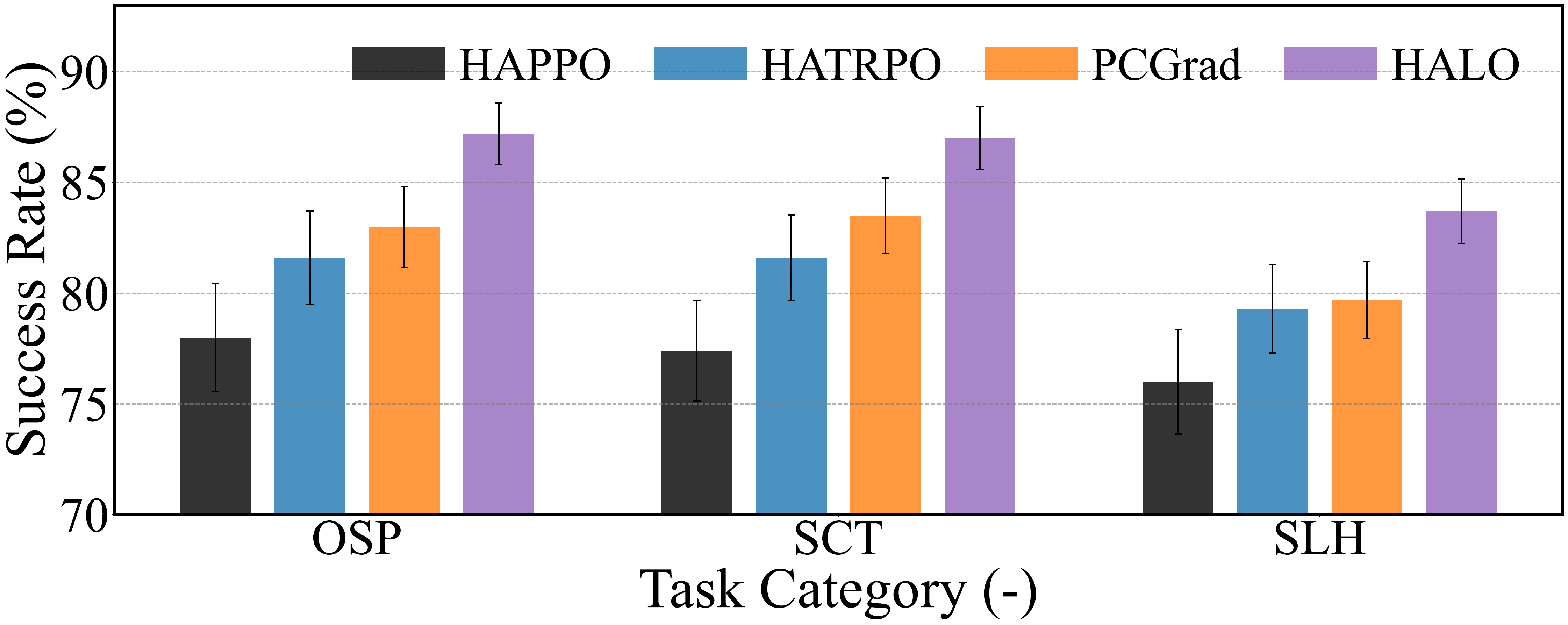}
    \caption{Comparison of HALO and baseline MARL algorithms across the nine scenarios in OSP, SCT and SLH tasks.}
    \label{fig:line_difficulty}
\end{figure}

    \subsection{Experimental setup}
    \label{sec:setup}

\noindent\textbf{Embodied task suite and test setup.} We study three continuous-space coordination tasks: 
(1) \textbf{Orientation-sensitive pushing (OSP)}: pushing an object through a directional opening requiring precise yaw alignment. 
(2) \textbf{Spatially-confined transport (SCT)}: transport through narrow passages requiring synchronized velocity and tight spatial coordination. 
(3) \textbf{Super-long object handling (SLH)}: transporting a long board via coordinated pivoting and shuffling maneuvers. The training is performed in the Isaac Lab \citep{mittal2023orbit}, and the physical experiments are conducted on a Unitree G1 robot cooperating with a human partner with reliance on motion-capture (MoCap) system. The detailed test settings are provided in \textbf{Appendix~\ref{app:implementation_details}}.


\begin{table*}[h]
\caption{Comprehensive performance matrix and global optimization analysis of heterogeneous coordination: the upper part evaluates the scenario-specific success rate across nine representative coordination challenges in OSP, SCT and SLH tasks, reported as mean $\pm$ std. The lower part provides a synchronized mechanism analysis at the 2B-step steady state, correlating overall task proficiency with fundamental optimization metrics including Overall success rate, convergence, final return, gradient alignment ($\cos\phi$), rationality gap ($V$) and gradient conflict rate. Bold and underlined indicate first and second best, respectively.}
\label{tab:main_giant}
\centering
\small 
\renewcommand{\arraystretch}{1.1} 
\begin{tabular*}{\textwidth}{@{\extracolsep{\fill}} ll ccccc}
\toprule
\textbf{Task category} & \textbf{Scenario name} & \textbf{HAPPO} & \textbf{HATRPO} & \textbf{PCGrad} & \cellcolor{gray!15}\textbf{HALO (Ours)} & \textbf{Improvement} \\
\midrule
\multirow{4}{*}{\textbf{OSP}} 
& Alignment          & 83.6 $\pm$ 5.3 & \underline{87.9 $\pm$ 4.5} & 87.7 $\pm$ 3.8 & \cellcolor{gray!15}\textbf{92.8 $\pm$ 3.0} & +5.6\% \\
& Turnaround         & 77.6 $\pm$ 6.3 & 81.5 $\pm$ 5.5 & \underline{83.6 $\pm$ 4.8} & \cellcolor{gray!15}\textbf{86.7 $\pm$ 3.5} & +3.7\% \\
& Corner entry       & 72.7 $\pm$ 7.0 & 75.3 $\pm$ 6.0 & \underline{77.8 $\pm$ 5.3} & \cellcolor{gray!15}\textbf{82.2 $\pm$ 4.0} & +5.7\% \\
\cmidrule{2-7}
& OSP average        & 78.0 $\pm$ 6.3 & 81.6 $\pm$ 5.3 & \underline{83.0 $\pm$ 4.5} & \cellcolor{gray!15}\textbf{87.2 $\pm$ 3.5} & \textbf{+5.1\%} \\
\midrule
\multirow{4}{*}{\textbf{SCT}} 
& Narrow gate        & 80.1 $\pm$ 4.8 & 83.4 $\pm$ 4.3 & \underline{88.6 $\pm$ 3.5} & \cellcolor{gray!15}\textbf{91.1 $\pm$ 2.8} & +2.8\% \\
& S-shaped path      & 76.3 $\pm$ 5.8 & \underline{82.1 $\pm$ 5.0} & 80.6 $\pm$ 4.5 & \cellcolor{gray!15}\textbf{84.9 $\pm$ 3.8} & +3.4\% \\
& U-shaped path      & 75.7 $\pm$ 6.5 & 79.2 $\pm$ 5.3 & \underline{81.3 $\pm$ 4.8} & \cellcolor{gray!15}\textbf{85.1 $\pm$ 4.3} & +4.7\% \\
\cmidrule{2-7}
& SCT average        & 77.4 $\pm$ 5.8 & 81.6 $\pm$ 4.8 & \underline{83.5 $\pm$ 4.3} & \cellcolor{gray!15}\textbf{87.0 $\pm$ 3.5} & \textbf{+4.2\%} \\
\midrule
\multirow{4}{*}{\textbf{SLH}} 
& Facing mode        & 79.3 $\pm$ 5.0 & \underline{84.4 $\pm$ 1.6} & 83.2 $\pm$ 3.5 & \cellcolor{gray!15}\textbf{88.2 $\pm$ 2.8} & +4.5\% \\
& Lateral shuffle    & 73.7 $\pm$ 6.8 & 75.9 $\pm$ 5.8 & \underline{77.3 $\pm$ 5.0} & \cellcolor{gray!15}\textbf{80.7 $\pm$ 4.5} & +4.4\% \\
& Pivoting           & 74.9 $\pm$ 6.0 & 77.5 $\pm$ 5.3 & \underline{78.6 $\pm$ 4.5} & \cellcolor{gray!15}\textbf{82.3 $\pm$ 3.8} & +4.7\% \\
\cmidrule{2-7}
& SLH average        & 76.0 $\pm$ 6.0 & 79.3 $\pm$ 5.0 & \underline{79.7 $\pm$ 4.3} & \cellcolor{gray!15}\textbf{83.7 $\pm$ 3.8} & \textbf{+5.0\%} \\
\midrule
\midrule
\multicolumn{7}{c}{\textbf{Learning stability and mechanism analysis (steady-state at 2B steps)}} \\
\midrule
\textbf{Algorithm} & \multicolumn{1}{c}{\textbf{Overall SR} $\uparrow$} & \multicolumn{1}{c}{\textbf{Conv. step} $\downarrow$} & \multicolumn{1}{c}{\textbf{Final return} $\uparrow$} & \multicolumn{1}{c}{\textbf{Align $\cos\phi$} $\uparrow$} & \multicolumn{1}{c}{\textbf{Gap $V$} $\downarrow$} & \multicolumn{1}{c}{\textbf{GCR} $\downarrow$} \\
\midrule
\textbf{HAPPO}   & \multicolumn{1}{c}{77.1\%} & \multicolumn{1}{c}{1.8B} & \multicolumn{1}{c}{3.44} & \multicolumn{1}{c}{0.67} & \multicolumn{1}{c}{4.89} & \multicolumn{1}{c}{72.5\%} \\
\textbf{HATRPO}  & \multicolumn{1}{c}{80.8\%} & \multicolumn{1}{c}{1.3B} & \multicolumn{1}{c}{3.60} & \multicolumn{1}{c}{0.68} & \multicolumn{1}{c}{1.53} & \multicolumn{1}{c}{54.2\%} \\
\textbf{PCGrad}  & \multicolumn{1}{c}{\underline{82.1\%}} & \multicolumn{1}{c}{\underline{\textbf{1.2B}}} & \multicolumn{1}{c}{\underline{3.76}} & \multicolumn{1}{c}{\underline{0.84}} & \multicolumn{1}{c}{\underline{0.20}} & \multicolumn{1}{c}{\underline{25.0\%}} \\
\rowcolor{gray!15} \textbf{HALO} & \multicolumn{1}{c}{\textbf{86.0\%}} & \multicolumn{1}{c}{1.3B} & \multicolumn{1}{c}{\textbf{4.02}} & \multicolumn{1}{c}{\textbf{0.91}} & \multicolumn{1}{c}{\textbf{0.09}} & \multicolumn{1}{c}{\textbf{4.2\%} } \\
\bottomrule
\end{tabular*}
\end{table*}

    \noindent\textbf{Baselines and metrics.} We evaluate HALO against state-of-the-art heterogeneous MARL methods: 
    (1) \textbf{HAPPO} and \textbf{HATRPO}, representing the sequential trust-region paradigm; 
    (2) \textbf{PCGrad}, a baseline integrates the HAPPO architecture with gradient surgery. Performance is quantified via success rate (SR), gradient alignment $\cos(\phi)$ (Align), the rationality gap $V(\bm{\theta})$ (Gap) and gradient conflict rate (GCR) as the primary stability indicator.

\begin{figure}[t]
    \centering
    \begin{subfigure}{0.48\columnwidth}
        \centering
        \includegraphics[width=\textwidth]{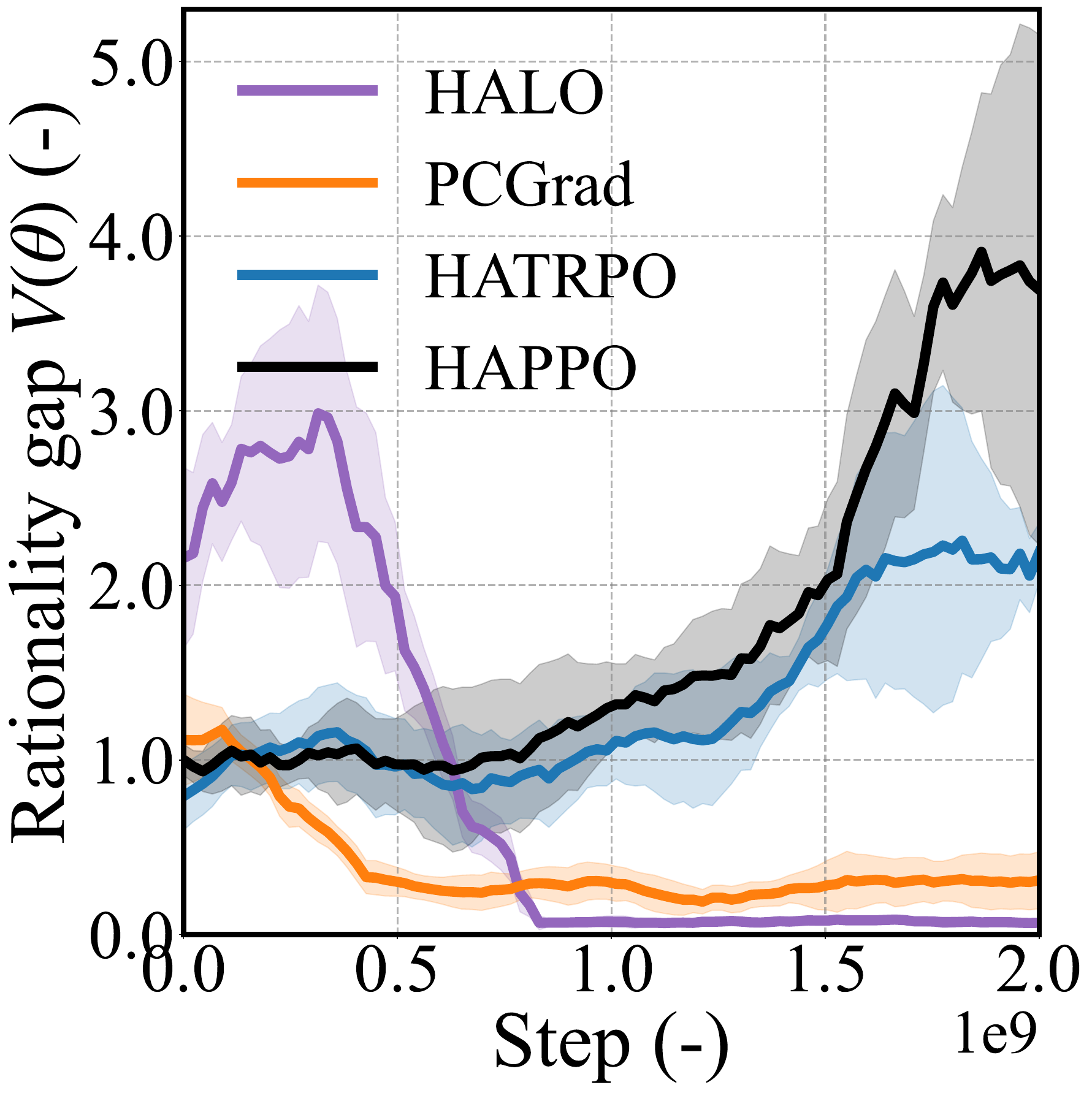}
        \caption{Rationality gap $V(\bm{\theta})$}
        \label{fig:rationality_gap}
    \end{subfigure}
    \hfill
    \begin{subfigure}{0.48\columnwidth}
        \centering
        \includegraphics[width=\textwidth]{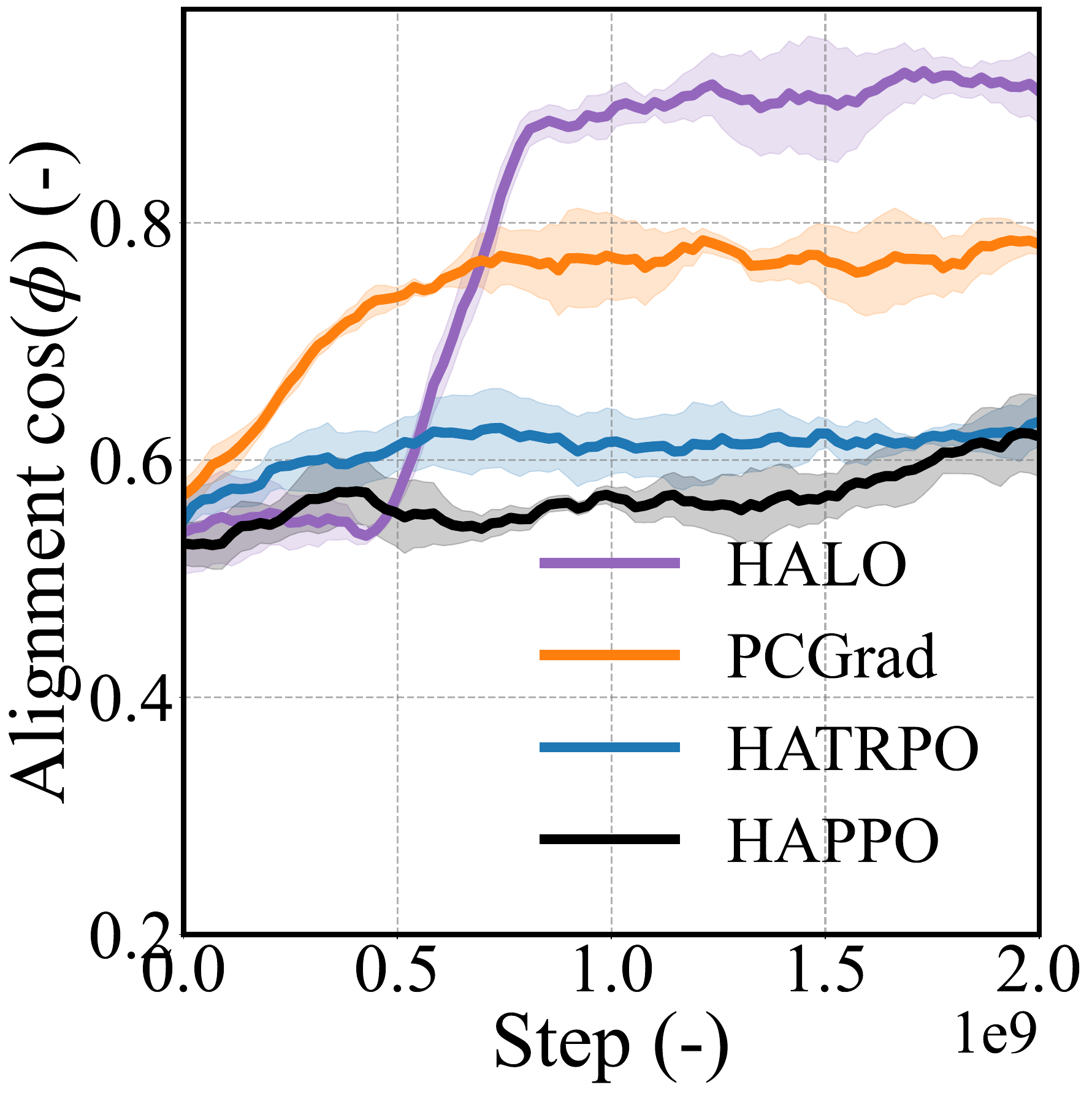}
        \caption{Gradient alignment $\cos(\phi)$}
        \label{fig:alignment_std}
    \end{subfigure}
    \caption{Optimization dynamics analysis: (a) monotonic dissipation of $V(\bm{\theta})$ under the Lyapunov stability certificate; (b) rapid convergence of gradient alignment. HALO eliminates solenoidal components to stabilize the joint parameter manifold.}
    \label{fig:mechanism_curves}
\end{figure}

\subsection{Performance benchmark in simulation}
\label{sec:main_results}

We demonstrate the performance superiority of HALO across physical coupling tasks including OSP, SCT and SLH, shwon in Fig.~\ref{fig:sim_env} and the cumulative reward is visualized in Fig.~\ref{fig:episode_return}. Besides, Fig.~\ref{fig:line_difficulty} further illustrates the scenario-specific success rates. As synthesized in Table~\ref{tab:main_giant}, in the OSP task, HALO achieves an average SR of 87.2\%, outperforming HATRPO (81.6\%) and HAPPO (78.0\%). These results are consistent with our structural pathology analysis, with HALO achieving a rationality gap of 0.09 and the highest alignment score of 0.91. The increased computation time of HVP is diluted by environment sampling and inference in high-throughput settings, with negligible memory overhead ($<21.1\%$) due to autograd-based HVP without explicit Hessian materialization, and HALO achieves a 13.1\% reduction in total wall-clock training time compared to HAPPO. 

\begin{figure}[h]
    \centering    \includegraphics[width=0.48\textwidth]{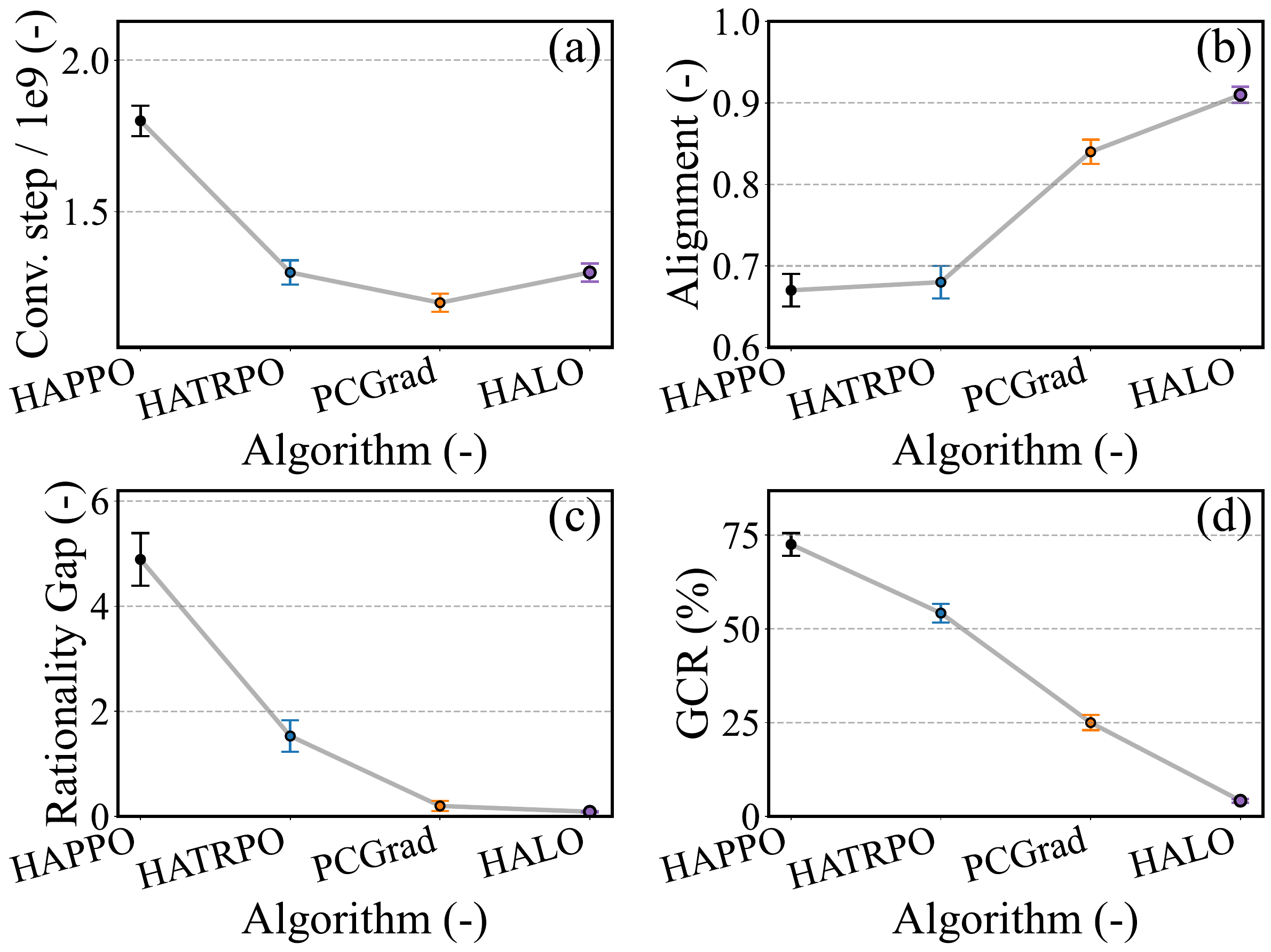}
    \caption{Scalability and algorithm metrics analysis: (a) convergence steps required to reach performance plateau; (b) steady-state rationality gap $V$; (c) final gradient alignment $\cos \phi$. (d) gradient conflict rate across algorithms.}
    \label{fig:mechanism_grid}
\end{figure}


\subsection{Mechanism analysis of HALO}
\label{sec:mechanism}

\textbf{Geometric rectification of the vector field.} As shown in Fig.~\ref{fig:mechanism_curves}(a), Fig.~\ref{fig:mechanism_grid} and Table~\ref{tab:main_giant}, HALO ensures a descent of $V(\bm{\theta})$, reaching a steady state of 0.09, while HAPPO shows a gap of 4.89. This is further evidenced by the temporal decay rate of the gap achieved by HALO (Fig.~\ref{fig:mechanism_evolution}).

\begin{figure}[t]
    \centering
    \begin{subfigure}{0.48\columnwidth}
        \centering
        \includegraphics[width=\textwidth]{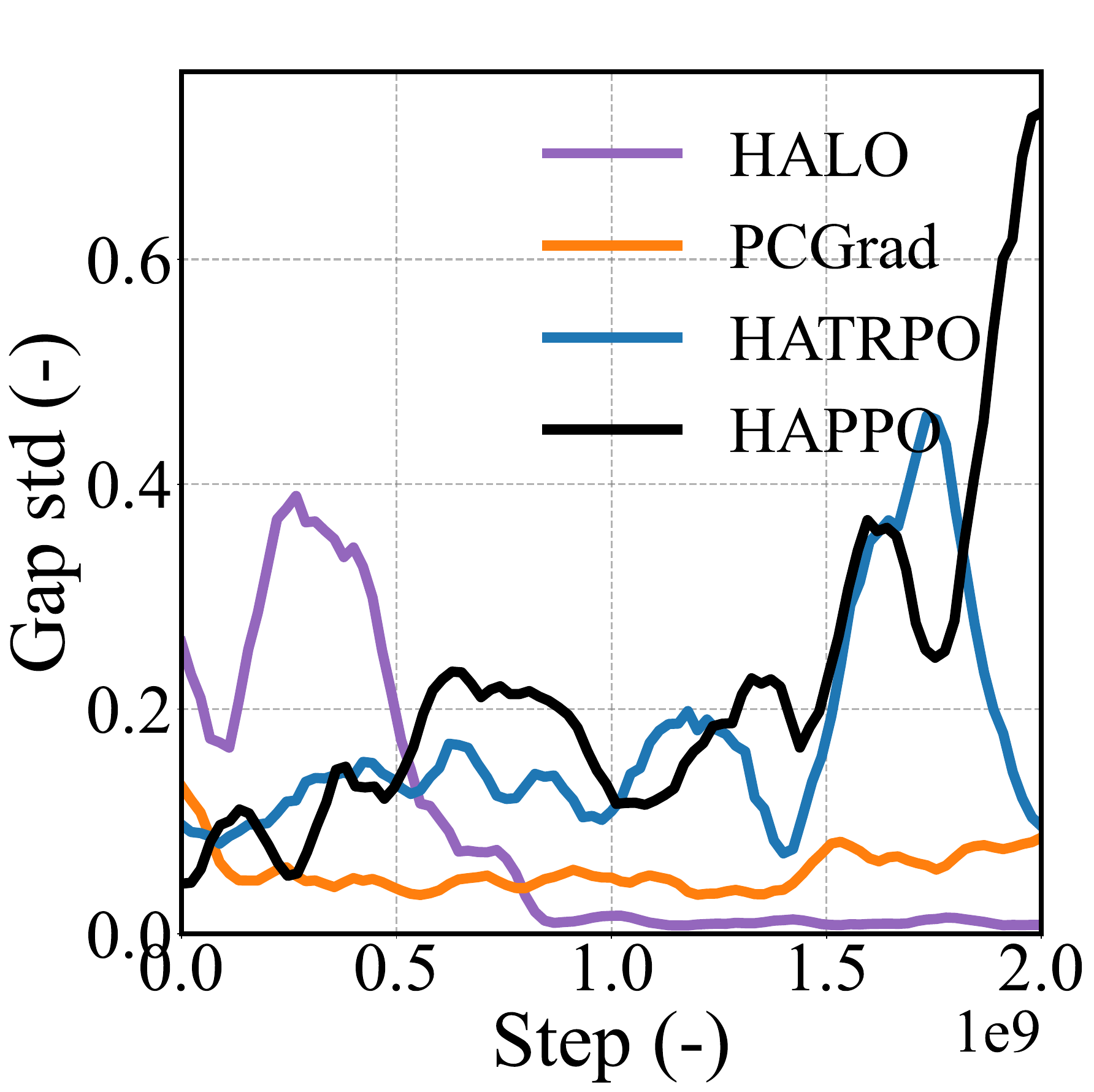}
        \caption{Gap std evolution}
        \label{fig:gap_std_evol}
    \end{subfigure}
    \hfill
    \begin{subfigure}{0.48\columnwidth}
        \centering
        \includegraphics[width=\textwidth]{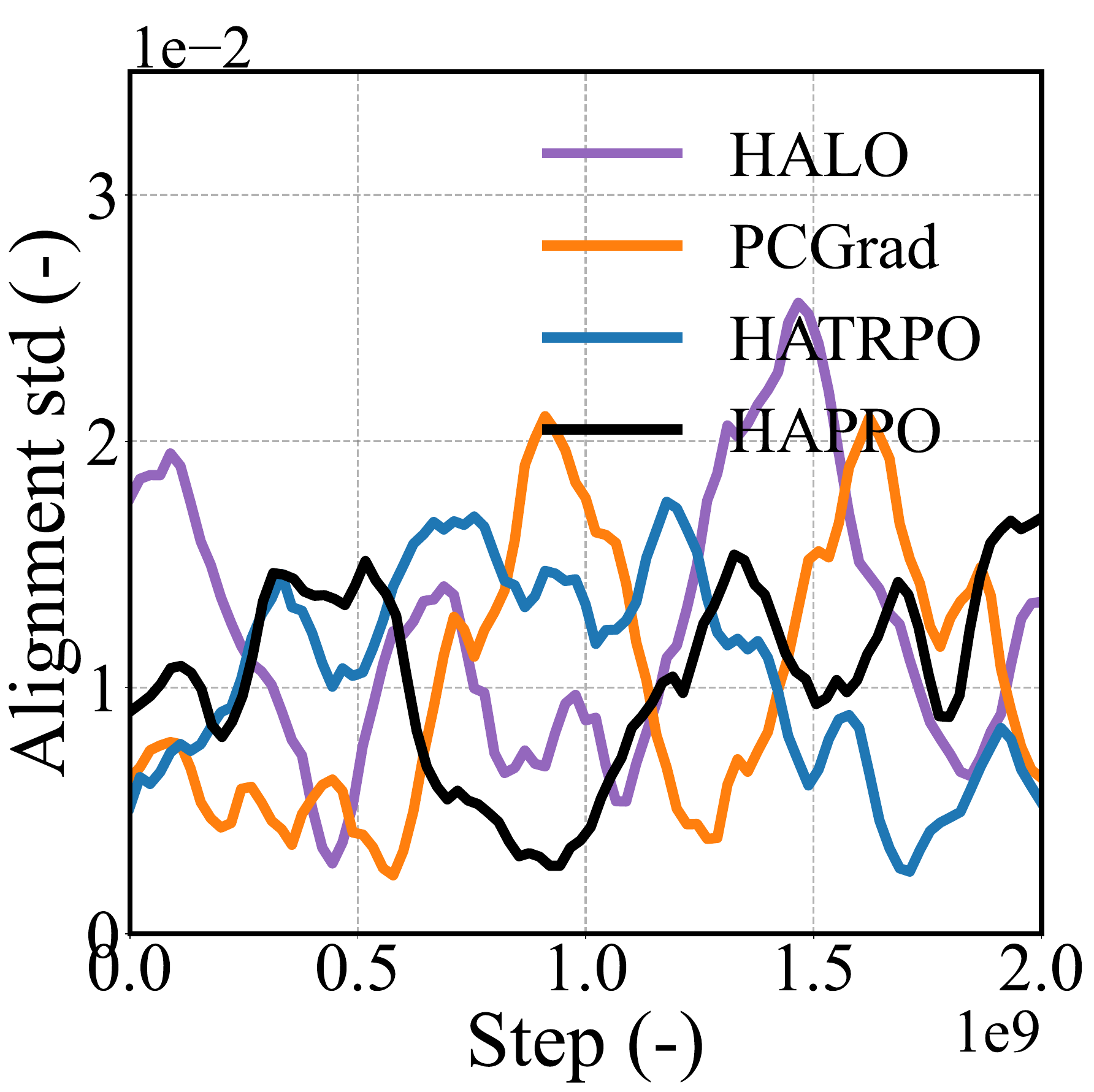}
        \caption{Alignment std evolution}
        \label{fig:align_std_evol}
    \end{subfigure}
    
    \vspace{0.5cm}
    
    \begin{subfigure}{0.48\columnwidth}
        \centering
        \includegraphics[width=\textwidth]{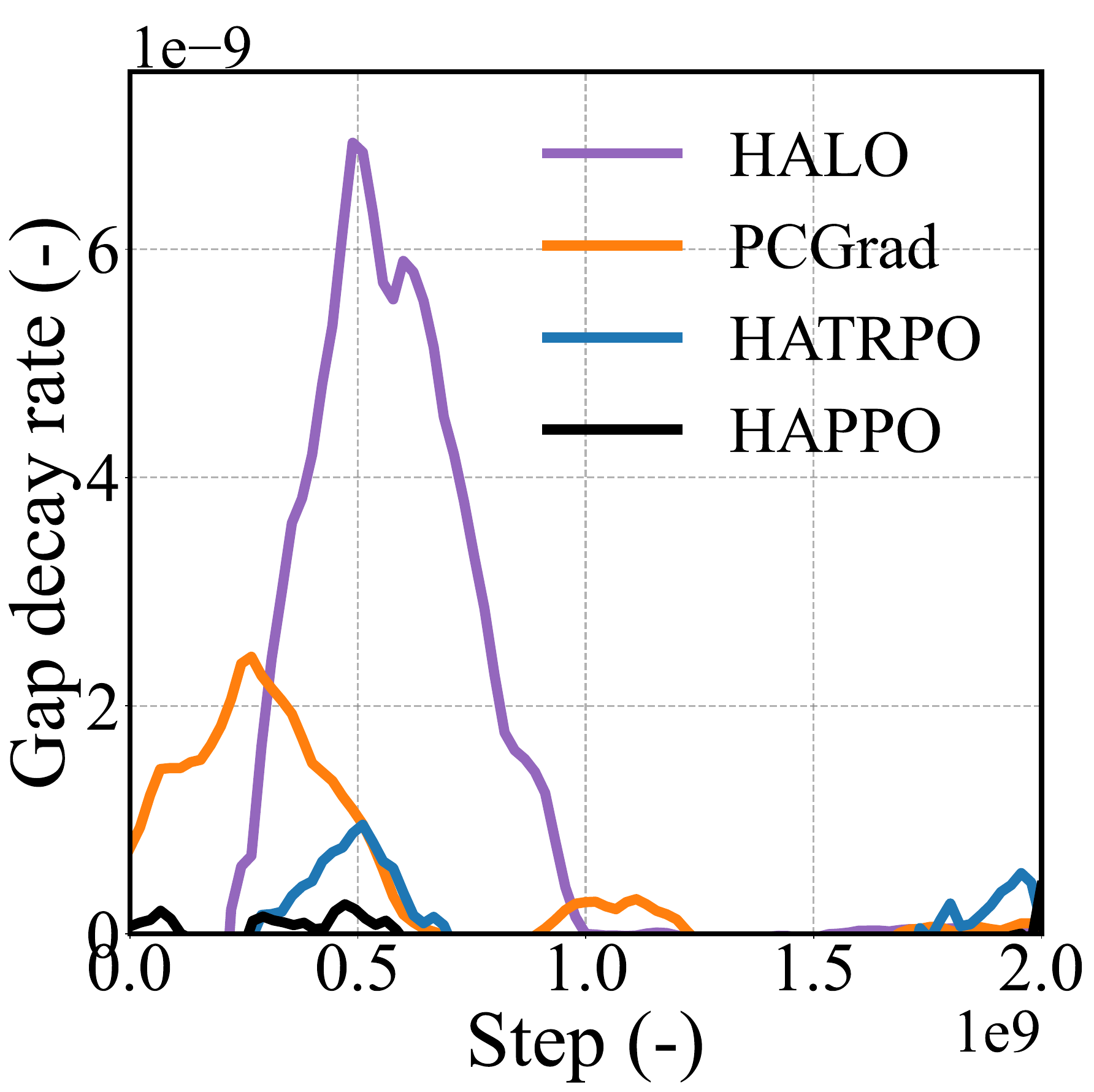}
        \caption{Gap decay rate}
        \label{fig:gap_decay_rate}
    \end{subfigure}
    \hfill
    \begin{subfigure}{0.48\columnwidth}
        \centering
        \includegraphics[width=\textwidth]{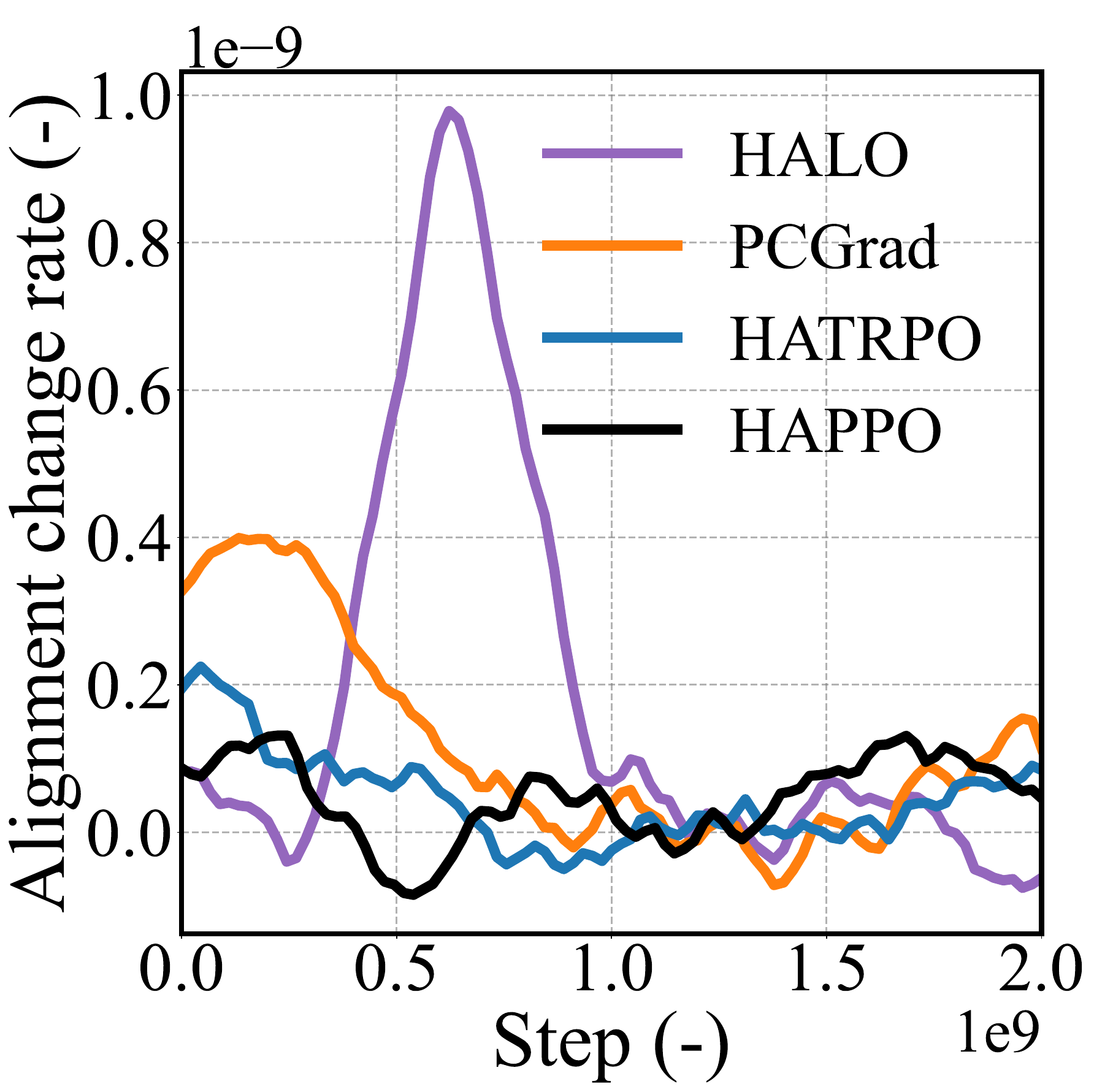}
        \caption{Alignment change rate}
        \label{fig:align_change_rate}
    \end{subfigure}
    \caption{Detailed mechanism evolution: (a) standard deviation of rationality gap; (b) standard deviation of alignment; (c) temporal decay rate of the gap; (d) instantaneous change rate of alignment.}
    \label{fig:mechanism_evolution}
\end{figure}

\textbf{Stabilization via alignment.} Table~\ref{tab:main_giant} demonstrates that by projecting $\mathbf{u}_{\text{ind}}$ onto the stability half-space $\mathcal{H}_{\text{stable}}$, HALO achieves a global alignment of 0.91 and reduces the GCR to 4.2\%. This geometric rectification, visualized in Fig.~\ref{fig:mechanism_curves}(b) and in Fig.~\ref{fig:mechanism_evolution}(d), filters out rotational instabilities.

\begin{figure}[t]
    \centering
    \begin{subfigure}{\columnwidth}
        \centering
        \includegraphics[width=\textwidth]{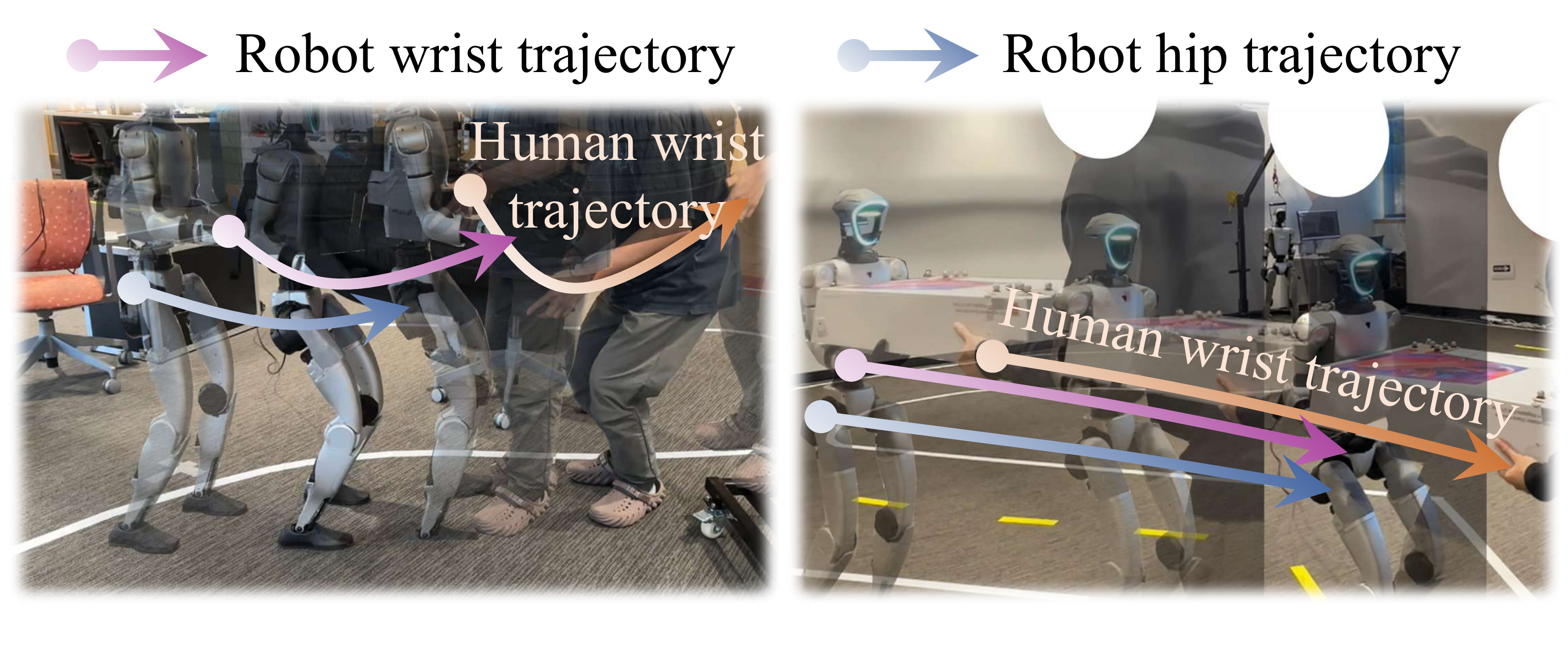}
        \vspace{-6mm} 
        \caption{Vertical synchronization: adaptive squatting}
        \label{fig:sync_vertical}
    \end{subfigure}
    \begin{subfigure}{\columnwidth}
        \centering
        \includegraphics[width=\textwidth]{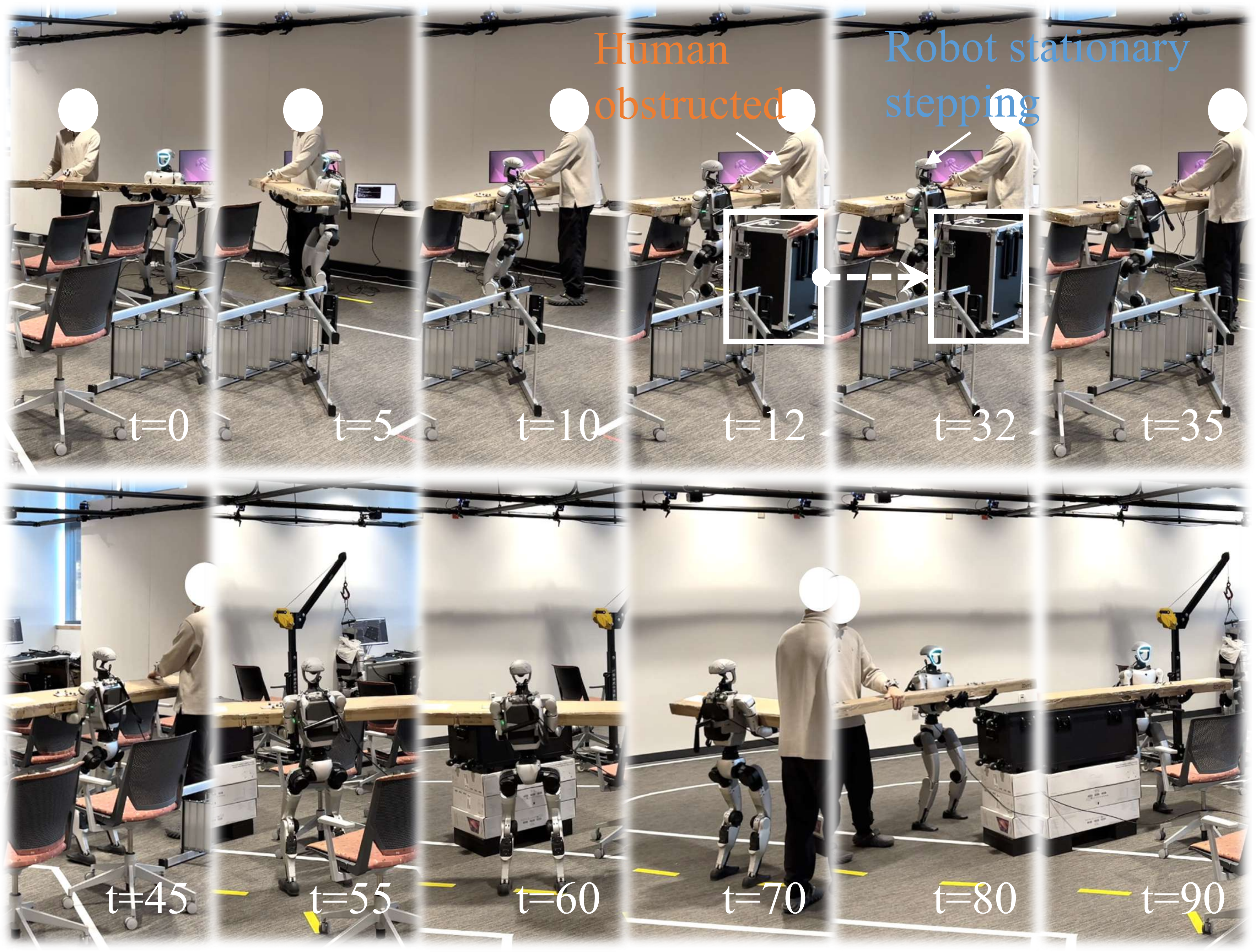}
        \vspace{-4mm} 
        \caption{Movement synchronization: obstruction resilience}
        \label{fig:sync_horizontal}
    \end{subfigure}
    \caption{Micro-view analysis of coordination resilience: (a) reactive height modulation during unscripted partner motion; (b) stability maintenance during 20s obstructions through stationary stepping and velocity re-synchronization.}
    \label{fig:robustness_analysis}
\end{figure}

\subsection{Ablation study and structural robustness}
\label{sec:ablation}

To isolate the contribution of each algorithmic component, an ablation study is conducted as shown in Table~\ref{tab:ablation_results}.

\begin{figure*}[t]
    \centering
    \includegraphics[width=\textwidth]{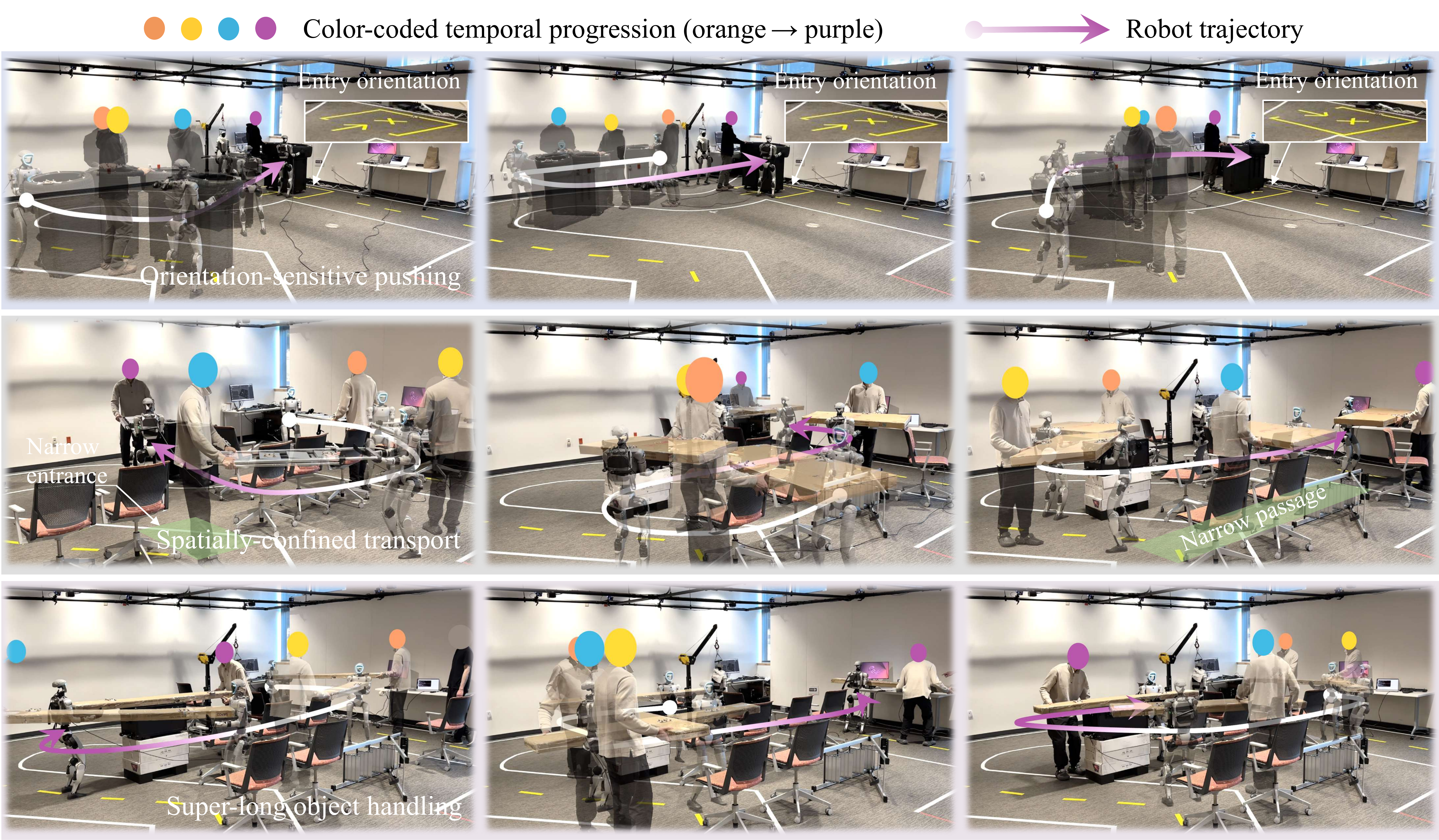} 
    \caption{Sim-to-real deployment across embodied tasks: macro-view of deployment on OSP (top), SCT (Middle) and SLH (Bottom). Temporal progression is indicated by color gradients; arrows trace the stable trajectories maintained by HALO despite complex physical coupling and human-induced perturbations.}
    \label{fig:real_world_macro}
\end{figure*}

\textbf{Hard projection vs. Lagrangian penalty.} The Soft-$V$ variant, which incorporates $V(\bm{\theta})$ as a Lagrangian penalty term in the objective, yields only marginal improvements ($76.5\%$ SR). This confirms that soft regularization merely modulates gradient magnitude but lacks the geometric necessity to rectify the update direction. Only the hard analytic projection ($\mathcal{P}$) onto the stability half-space $\mathcal{H}_{\text{stable}}$ ensures that policy updates strictly enter the contractive set.

\textbf{Mechanism of adaptive synergy.} The progression from static projection to the full HALO framework underscores the synergy between stability and coherence. While $\mathcal{P}$ provides the convergence certificate, adaptive scheduling ($\eta$) and alignment ($\cos\phi$) further refine the trajectory on the joint parameter manifold. HALO suppresses coordination dissonance, as evidenced by the final gap $V$ of $0.09$.

\begin{table}[H]
\caption{\textbf{Ablation matrix on SLH-extreme (pivoting) task.} Results report mean $\pm$ std. $\mathcal{P}$: Lyapunov projection; $\eta$: adaptive scheduling; $\cos\phi$: alignment rectification.}
\label{tab:ablation_results}
\centering
\small
\renewcommand{\arraystretch}{1.1}
\begin{tabular*}{\columnwidth}{@{\extracolsep{\fill}} l | ccc | cc}
\toprule
\textbf{Variant} & \textbf{$\mathcal{P}$} & \textbf{$\eta$} & \textbf{$\cos\phi$} & \textbf{SR (\%)} $\uparrow$ & \textbf{Gap $V$} $\downarrow$ \\
\midrule
HAPPO baseline & $\times$ & $\times$ & $\times$ & 74.9 $\pm$ 6.0 & 4.89 \\
Soft-$V$ penalty & $\times$ & $\times$ & $\times$ & 76.5 $\pm$ 5.8 & 3.21 \\

Static projection & \checkmark & $\times$ & $\times$ & 79.5 $\pm$ 4.8 & 0.85 \\
HALO w/o align & \checkmark & \checkmark & $\times$ & 80.8 $\pm$ 4.2 & 0.24 \\
\rowcolor{gray!15} \textbf{HALO (full)} & \checkmark & \checkmark & \checkmark & \textbf{82.3 $\pm$ 3.8} & \textbf{0.09} \\
\bottomrule
\end{tabular*}
\end{table}

\subsection{Real-world human-robot collaboration}
\label{sec:real_world}

To verify the effectiveness and sim-to-real transferability of HALO for HRC, the real-world evaluation focuses on coordination resilience under partner non-stationarity. We compare HALO against two primary baselines: \textbf{PCGrad} and \textbf{Robot-Script} (see Appendix~\ref{app:robot_script_details}).

\noindent\textbf{Microview resilience analysis.} The effectiveness of HALO is examined through coordination resilience as shown in Fig.~\ref{fig:robustness_analysis}. In Fig.~\ref{fig:sync_vertical}, it is observed that the G1 autonomously maintains a horizontal load plane when its partner's height varies. Furthermore, Fig.~\ref{fig:sync_horizontal} demonstrates movement synchronization during unscripted 20s obstructions.

\begin{table}[h]
\centering
\caption{Real-world performance: metrics are mean values over 5 trials. T: time to destination (s); SR: success rate; DR: object drop rate (\%); Ang: absolute value of tilt rate ($^\circ/s$); $v_d$: post-halt drift (cm/s); Wait: proactive waiting for partner synchrony.}
\label{tab:real_robot_stats}
\small
\renewcommand{\arraystretch}{1.1}
\begin{tabular*}{\columnwidth}{@{\extracolsep{\fill}} l ccccc}
\toprule
& \multicolumn{2}{c}{\textbf{Task 1: OSP}} & \multicolumn{3}{c}{\textbf{Task 2: SCT}} \\
\cmidrule(lr){2-3} \cmidrule(lr){4-6}
\textbf{Method} & \textbf{T} $\downarrow$ & \textbf{SR} $\uparrow$ & \textbf{T} $\downarrow$ & \textbf{DR} $\downarrow$ & \textbf{Ang} $\downarrow$ \\
\midrule
Robot-Script & 74.2 & 100\% & 101.6 & 60\% & 4.3 \\
PCGrad       & \underline{65.2} & \underline{100\%} & \underline{81.5} & \underline{0\%} & \underline{2.4} \\
\rowcolor{gray!15} HALO & \textbf{61.7} & \textbf{100\%} & \textbf{76.2} & \textbf{0\%} & \textbf{2.2} \\
\midrule
\midrule
& \multicolumn{5}{c}{\textbf{Task 3: SLH (Stability Under Halting)}} \\
\cmidrule(lr){2-6}
\textbf{Method} & \textbf{Wait} & \textbf{T} $\downarrow$ & \textbf{DR} $\downarrow$ & \textbf{Ang} $\downarrow$ & \textbf{$v_d$} $\downarrow$ \\
\midrule
Robot-Script & -- & -- & 100\% & 4.9 & -- \\
PCGrad       & \checkmark & \underline{86.9} & \underline{20\%} & \underline{2.7} & \underline{1.59} \\
\rowcolor{gray!15} HALO & \checkmark & \textbf{85.6} & \textbf{20\%} & \textbf{2.4} & \textbf{1.22} \\
\bottomrule
\end{tabular*}
\end{table}

\noindent\textbf{Quantitative analysis.}  As synthesized in Table~\ref{tab:real_robot_stats}, HALO exhibits superior coordination resilience. In OSP and SCT tasks, it significantly reduces time-to-destination ($76.2$~s) and minimizes tilt rates ($2.2^\circ/s$). Notably, in the SLH task, HALO maintains exceptional stability during unscripted human halting; unlike the robot-script baseline, HALO proactively dissipates residual momentum, yielding a minimal post-halt drift of $1.22$~cm/s. This performance underscores HALO's ability to internalize team-level synergy and fluid interaction, as visualized in Fig.~\ref{fig:real_world_macro}.

\section{Conclusion}
In this study, we propose HALO to address the inherent structural instabilities in decentralized human-robot collaboration. We establish MARL as a unified paradigm for exploring expansive interaction manifolds, effectively transcending the limitations of traditional scripted human models. We define RG as a variational mismatch between decentralized best-response dynamics and a centralized cooperative ascent direction, reformulating the learning process as a dissipative dynamical system. HALO introduces a formal stability certificate within the policy-parameter manifold, utilizing an optimal quadratic projection to rectify decentralized gradients and ensure the monotonic contraction of coordination disagreement. Our theoretical framework, validated through both large-scale simulations and real-world humanoid deployments, demonstrates that certifying stability in the parameter space directly translates to superior trajectory coordination and resilience in safety-critical, unstructured environments. Ultimately, HALO provides a foundation for bridging the gap between decentralized individual rationality and global collaborative synergy.



\section*{Software and Data}

The project website, which includes videos, additional results, the software, and supplementary materials, is available at: https://HaoZhang-THU.github.io/HALO/.

\section*{Acknowledgements}

The authors express their gratitude to the ETAIC Lab (https://ETAIC.github.io/) led by Prof. H. Eric Tseng at UTA, for providing the experimental resources and facilities that made this research possible. We also thank Yisen Li (UPenn), John Song (UTA), and all research assistants and volunteers at ETAIC Lab for their technical support during physical deployments and for helping validate the generalization and robustness of the system.


\section*{Impact Statement}

Assistive human–robot collaboration requires robustness to long-tail and out-of-distribution human behaviors that scripted, replay-based, or imitation-driven paradigms cannot adequately capture. These limitations present a practical bottleneck for deploying collaborative robots in settings where non-stationary human intent and physical coupling make failure costly. This work frames HRC as a MARL problem defined over an effectively infinite interaction space, enabling robots to co-adapt with human partners rather than overfit to predefined trajectories. However, decentralized MARL introduces structural instabilities that impede convergence and prevent reliable collaborative behavior. HALO addresses this challenge by introducing a Lyapunov-stabilized learning kernel that contracts coordination disagreement in parameter space.

The resulting stability-centered formulation provides a scalable foundation for the deployment of collaborative robots in industrial workflows, logistics operations, and assistive environments where mixed-autonomy systems must operate in heterogeneous users, dynamic intent patterns, and rare interaction modes. Potential positive societal impacts include reducing physical workload, expanding human operational capacity, and improving safety in labor-intensive settings. Ethical and deployment considerations are primarily related to safety, transparency, and operational responsibility, as well as potential labor displacement and transition of the workforce in society.

\nocite{langley00}

\bibliography{example_paper}
\bibliographystyle{icml2026}


\newpage
\appendix
\onecolumn

\section{Detailed theoretical derivations}
\label{app:extended_proofs}

\subsection{Analytical derivation of the stability gradient}
\label{app:stability_gradient_detail}

The core of HALO's rectification lies in the gradient of the Lyapunov potential $V(\bm{\theta})$. Recall that the rationality gap measures the $L_2$ discrepancy between the independent gradient field $\mathbf{u}_{\text{ind}}$ and the team rationality field $\mathbf{u}_{\text{team}}$ \cite{zhang2026cognitioncontrolmultiagent, papadimitriou2020equilibrium}:
\begin{equation}
    V(\bm{\theta}) \triangleq \frac{1}{2} \| \mathbf{u}_{\text{ind}}(\bm{\theta}) - \mathbf{u}_{\text{team}}(\bm{\theta}) \|_2^2.
\end{equation}

To compute the stability normal vector $\mathbf{h} \triangleq \nabla_{\bm{\theta}} V$, we apply the multivariate chain rule to the inner product. Let $\mathbf{e}(\bm{\theta}) = \mathbf{u}_{\text{ind}}(\bm{\theta}) - \mathbf{u}_{\text{team}}(\bm{\theta})$ denote the error vector. The gradient is derived as follows \cite{mesbahi2010graph}:

\begin{enumerate}
    \item \textbf{Vector-valued chain rule:} The gradient $\nabla_{\bm{\theta}} V$ is the product of the jacobian of the error field and the error vector itself:
    \begin{equation}
        \nabla_{\bm{\theta}} V = \left( \frac{\partial \mathbf{e}}{\partial \bm{\theta}} \right)^\top \mathbf{e} = \left( \frac{\partial \mathbf{u}_{\text{ind}}}{\partial \bm{\theta}} - \frac{\partial \mathbf{u}_{\text{team}}}{\partial \bm{\theta}} \right)^\top (\mathbf{u}_{\text{ind}} - \mathbf{u}_{\text{team}}).
    \end{equation}
    
    \item \textbf{Jacobian components:} We define $\mathbf{H}_{\text{ind}} \in \mathbb{R}^{D \times D}$ as the jacobian of the independent field. In decentralized MARL, this matrix is generally non-symmetric, reflecting the underlying geometry of multi-agent learning dynamics \cite{gabor2023geometry}:
    \begin{equation}
        {\mathbf{H}_{\text{ind}}}_{jk} = \frac{\partial {\mathbf{u}_{\text{ind}}}_j}{\partial \theta_k}.
    \end{equation}
    Correspondingly, $\mathbf{H}_{\text{team}} \in \mathbb{R}^{D \times D}$ is the hessian of the global team objective $J(\bm{\theta})$ \citep{doi:10.1137/100803481}:
    \begin{equation}
        {\mathbf{H}_{\text{team}}}_{jk} = \frac{\partial^2 J}{\partial \theta_j \partial \theta_k}.
    \end{equation}
    
    \item \textbf{Final analytic form:} The stability normal $\mathbf{h}$ is thus:
    \begin{equation}
        \mathbf{h} = (\mathbf{H}_{\text{ind}} - \mathbf{H}_{\text{team}})^\top (\mathbf{u}_{\text{ind}} - \mathbf{u}_{\text{team}}).
    \end{equation}
\end{enumerate}

\subsection{Explicit derivation of the stability-constrained projection}
\label{app:projection_derivation}

We assume $V(\bm{\theta})$ is $L$-smooth on the parameter manifold $\Theta$. For any two points $\bm{\theta}, \bm{\theta}' \in \Theta$, the $L$-smoothness property implies \cite{nocedal2006numerical}:
\begin{equation}
    V(\bm{\theta}') \le V(\bm{\theta}) + \langle \nabla V(\bm{\theta}), \bm{\theta}' - \bm{\theta} \rangle + \frac{L}{2} \| \bm{\theta}' - \bm{\theta} \|_2^2.
\end{equation}

Substituting the HALO update law $\bm{\theta}_{k+1} = \bm{\theta}_k + \eta \mathbf{d}^*_k$:
\begin{align}
    V(\bm{\theta}_{k+1}) &\le V(\bm{\theta}_k) + \eta \langle \nabla_{\bm{\theta}} V(\bm{\theta}_k), \mathbf{d}^*_k \rangle + \frac{L\eta^2}{2} \| \mathbf{d}^*_k \|_2^2 \\
    &= V(\bm{\theta}_k) + \eta \langle \mathbf{h}, \mathbf{d}^*_k \rangle + \frac{L\eta^2}{2} \| \mathbf{d}^*_k \|_2^2.
\end{align}

To ensure the monotonic dissipation of the rationality gap $V(\bm{\theta})$, HALO formulates a quadratic programming problem \cite{duchi2008efficient}:
\begin{equation}
    \begin{aligned}
        \mathbf{d}^* = & \arg \min_{\mathbf{d} \in \mathbb{R}^D} \frac{1}{2} \| \mathbf{d} - \mathbf{u}_{\text{ind}} \|_2^2 \\
        \text{s.t.} \quad & \langle \nabla_{\bm{\theta}} V, \mathbf{d} \rangle \le -\sigma V,
    \end{aligned}
\end{equation}

We define the Lagrangian function $\mathcal{L}(\mathbf{d}, \lambda)$ with a multiplier $\lambda \ge 0$ \cite{fletcher2013practical}:
\begin{equation}
    \mathcal{L}(\mathbf{d}, \lambda) = \frac{1}{2} \| \mathbf{d} - \mathbf{u}_{\text{ind}} \|_2^2 + \lambda (\mathbf{h}^\top \mathbf{d} + \sigma V).
\end{equation}

The KKT stationarity condition $\nabla_{\mathbf{d}} \mathcal{L} = 0$ yields \citep{7855740}:
\begin{equation}
    \mathbf{d}^* = \mathbf{u}_{\text{ind}} - \lambda \mathbf{h}.
\end{equation}

Solving $\lambda (\mathbf{h}^\top \mathbf{d}^* + \sigma V) = 0$:
\begin{enumerate}
    \item Case 1: If $\mathbf{h}^\top \mathbf{u}_{\text{ind}} + \sigma V \le 0$, then $\lambda^* = 0$.
    \item Case 2: If $\mathbf{h}^\top \mathbf{u}_{\text{ind}} + \sigma V > 0$, the constraint is active. Substituting $\mathbf{d}^*$ into the boundary:
    \begin{equation}
        \mathbf{h}^\top (\mathbf{u}_{\text{ind}} - \lambda^* \mathbf{h}) + \sigma V = 0 \implies \lambda^* = \frac{\mathbf{h}^\top \mathbf{u}_{\text{ind}} + \sigma V}{\| \mathbf{h} \|_2^2}.
    \end{equation}
\end{enumerate}

The unified closed-form solution is:
\begin{equation}
    \lambda^* = \max \left( 0, \frac{\langle \mathbf{h}, \mathbf{u}_{\text{ind}} \rangle + \sigma V}{\| \mathbf{h} \|_2^2 + \epsilon} \right), \quad \mathbf{d}^* = \mathbf{u}_{\text{ind}} - \lambda^* \mathbf{h}.
\end{equation}

\subsection{Asymptotic convergence analysis}
\label{app:convergence_analysis}

From the descent inequality $V(\bm{\theta}_{k+1}) \le V(\bm{\theta}_k) - \eta_k \sigma V(\bm{\theta}_k) + \frac{L \eta_k^2}{2} \| \mathbf{d}^*_k \|_2^2$, summing from $k=0$ to $K$:
\begin{equation}
    \sigma \sum_{k=0}^K \eta_k V(\bm{\theta}_k) \le V(\bm{\theta}_0) - V(\bm{\theta}_{K+1}) + \frac{L}{2} \sum_{k=0}^K \eta_k^2 \| \mathbf{d}^*_k \|_2^2.
\end{equation}

Under Robbins-Monro conditions ($\sum \eta_k = \infty, \sum \eta_k^2 < \infty$) and bounded gradients $\|\mathbf{d}^*\| \le G$ \cite{kushner2003stochastic, bottou2018optimization}:
\begin{equation}
    \sigma \sum_{k=0}^\infty \eta_k V(\bm{\theta}_k) \le V(\bm{\theta}_0) + \frac{LG^2}{2} \sum_{k=0}^\infty \eta_k^2 < \infty.
\end{equation}
Since $\sum \eta_k$ diverges, it must hold that $\liminf_{k \to \infty} V(\bm{\theta}_k) = 0$. Due to the monotonicity and uniform continuity of the update, we conclude:
\begin{equation}
    \lim_{k \to \infty} V(\bm{\theta}_k) = 0 \implies \lim_{k \to \infty} \| \mathbf{u}_{\text{ind}}(\bm{\theta}_k) - \mathbf{u}_{\text{team}}(\bm{\theta}_k) \|_2 = 0.
\end{equation}
This confirms that HALO asymptotically collapses the learning dynamics onto the rationality agreement manifold.


\section{Implementation and experimentation details}
\label{app:implementation_details}

\subsection{Technical details of the hierarchical control architecture}
\label{app:hierarchical_details}

The coordination framework operates via a tri-level control hierarchy designed to decouple long-term mission guidance from high-frequency physical stabilization \citep{11215998}. Each layer operates at a distinct temporal scale and functional scope as specified in table~\ref{tab:hierarchy_spec}.

\noindent \textbf{Top-level global mission planner.} At the start of each episode, the system establishes a geometric backbone using the path planning algorithm, which can also be realized alternatively using vision language model (VLM). The A* or VLM planner generates a collision-free path for the object’s center of mass (CoM) based on a static environmental map. This path serves as the reference trajectory for the downstream tactical and execution layers.

\noindent \textbf{Mid-level tactical MARL policy.} Operating at 2 Hz (500ms intervals), the mid-level MARL policy acts as the coordination command generator. It receives spatially-sampled waypoints from the global path and generates motion command for whole-body controller (WBC).

\noindent \textbf{Bottom-level whole-body controller.} The bottom-level execution is handled by a WBC policy operating at 50 Hz (20ms intervals). This layer ensures the dynamic stability of the G1 humanoid robot by tracking the 11-dimensional (11D) commands generated by the MARL policy. To bridge the frequency gap between the MARL (2 Hz) and WBC (50 Hz) layers, mid-level commands are held constant via a zero-order hold mechanism within each 500ms decision cycle.

\begin{table}[h]
\centering
\caption{\textbf{Temporal and functional specification of the control hierarchy.}}
\label{tab:hierarchy_spec}
\small
\renewcommand{\arraystretch}{1.2}
\begin{tabular}{lccl}
\toprule
\textbf{Layer} & \textbf{Frequency} & \textbf{Update schedule} & \textbf{Core functional scope} \\
\midrule
Top-level & N/A & Episode initialization & Global path planning (A* or VLM) for object CoM \\
Mid-level & 2 Hz & Tactical MARL policy & Motion command generation for collaboration \\
Bottom-level & 50 Hz & WBC execution & Body balancing and motion command tracking\\
\bottomrule
\end{tabular}
\end{table}

\subsection{Observation and command space formulations}
\label{app:spaces}

To facilitate long-horizon navigation without the computational overhead of recurrent architectures, we implement a dual-snapshot temporal encoding alongside a spatially-sampled look-ahead mechanism. Each agent processes a 210-dimensional observation vector $\mathbf{o}_i$, detailed in table~\ref{tab:observation_spec}. The strategic guidance is provided by a sliding window of waypoints $\{p_k\}_{k=1}^5$ extracted from the A* or VLM-planned global path at fixed curvilinear intervals (1m to 5m) relative to the robot's current position. All exteroceptive features—including waypoints, partner relative pose, and object geometry—are transformed into the agent's egocentric local frame to ensure spatial invariance.

\begin{table}[h]
\centering
\caption{\textbf{Composition of the 210-dimensional observation space.}}
\label{tab:observation_spec}
\small
\renewcommand{\arraystretch}{1.1}
\begin{tabular}{lcl}
\toprule
\textbf{Feature domain} & \textbf{Dim} & \textbf{Description (Egocentric coordinates)} \\
\midrule
Look-ahead guidance & 10 & 2D XY waypoints sampled via a sliding window along the global path \\
Self-proprioception & 13 & XY pos/vel, yaw, CoM height, torso pitch, wrist-to-shoulder XYZ \\
Partner observation & 13 & Relative XY pos/vel, yaw, CoM height, torso pitch, wrist-to-shoulder XYZ \\
Object geometry & 18 & 4 Top-surface corners XYZ, CoM XYZ pos, CoM XYZ vel \\
Contact feedback & 4 & Contact signals (left/right end-effector contact for both of the agents) \\
\midrule
Temporal features & 174 & Snapshot accumulation (58D per frame $\times$ 3 snapshots) \\
Environment awareness & 36 & 36-ray synthetic proximity, normalized as $1 - d/d_{max}$ \\
\midrule
\textbf{Total observation} & \textbf{210} & \textbf{Input to the 2 Hz tactical MARL policy} \\
\bottomrule
\end{tabular}
\end{table}

The command space utilizes a delta-over-base mechanism for precise end-effector coordination. As shown in table~\ref{tab:action_spec}, the MARL policy modulates a 3D spatial offset ($\Delta \mathbf{p}$) superimposed onto a task-specific base pose.

\begin{table}[h]
\centering
\caption{\textbf{MARL command space specification (11D).}}
\label{tab:action_spec}
\small
\renewcommand{\arraystretch}{1.1}
\begin{tabular}{lccl}
\toprule
\textbf{Command set} & \textbf{Dim} & \textbf{Formulation} & \textbf{Physical meaning} \\
\midrule
Locomotion base & 3 & Absolute & Target velocities $[v_x, v_y]$ and orientation (yaw angle) \\
Postural setpoints & 2 & Absolute & Target CoM height ($H_{\text{CoM}}$) and torso pitch ($\alpha_{\text{torso}}$) \\
Wrist modulation & 6 & Relative delta & 3D XYZ offset added to task-specific base poses for both Wrist \\
\bottomrule
\end{tabular}
\end{table}

\subsection{Hyperparameters and training configuration}
\label{app:hyperparams}

The tactical coordinator is implemented via a shared-parameter HAPPO framework. This architecture facilitates CTDE, effectively mitigating the non-stationarity inherent in multi-agent coordination. Both actor and critic networks utilize a multilayer perceptron (MLP) backbone with hidden layers of [256, 256, 128]. To ensure stable gradient propagation over the $2.0 \times 10^9$ steps, we employ orthogonal initialization with a gain of $\sqrt{2}$. The optimization process utilizes the Adam optimizer coupled with a cosine annealing learning rate schedule, balancing exploratory breadth with asymptotic convergence. Detailed hyperparameters are synthesized in table~\ref{tab:training_hyper}.

\begin{table}[h]
\centering
\caption{\textbf{Optimization and topology hyperparameters.}}
\label{tab:training_hyper}
\small
\renewcommand{\arraystretch}{1.2}
\begin{tabular}{lc|lc}
\toprule
\textbf{Optimization parameter} & \textbf{Value} & \textbf{Architecture parameter} & \textbf{Value} \\
\midrule
Learning rate ($\alpha$) & $1.0 \times 10^{-4}$ & Hidden layers (actor/critic) & [256, 256, 128] \\
Epochs per update & 10 & Activation function & ReLU \\
Minibatch count & 16 & Layer initialization & Orthogonal \\
Entropy coefficient & 0.01 & Optimizer & Adam \\
Discount factor ($\gamma$) & 0.99 & Weight decay ($L_2$) & $1.0 \times 10^{-4}$ \\
GAE parameter ($\lambda$) & 0.95 & Gradient clipping ($\|g\|_2$) & 10.0 \\
Clipping epsilon ($\epsilon$) & 0.2 & Learning rate schedule & Cosine annealing \\
Value loss coefficient & 0.5 & Total MARL action steps & $2.0 \times 10^9$ \\
\bottomrule
\end{tabular}
\end{table}

The training objective is defined through a path-wise reward that prioritizes geodesic progress along the A* or VLM pre-planned object CoM movement trajectory while enforcing structural stability. This formulation provides a dense reward signal that guides agents through non-convex environmental constraints by focusing on incremental advancement.

\subsection{Baseline robot-script implementation}
\label{app:robot_script_details}

To evaluate the necessity of the MARL framework and the coordination gains of HALO, we implement a robot-script baseline based on independent PPO (IPPO). This baseline simplifies the HRC into a single-agent control task by modeling the human partner as a stochastic dynamical load source with predefined mobility. The robot-script policy utilizes a MLP backbone with hidden layers of $[256, 256, 128]$, identical to the architecture employed in HALO. To ensure a fair yet rigorous comparison, the action and observation spaces are aligned with those of HALO, with the exception of collaborative features. Specifically, the state of the interactive partner is replaced by the simplified kinematic state of the human proxy (represented as two boxes), which comprises the relative position, velocity, and orientation.

\begin{table}[h]
\centering
\caption{Stochastic perturbations for simulating human-like dynamical loads.}
\label{tab:script_noise}
\small
\renewcommand{\arraystretch}{1.2}
\begin{tabular}{lccl}
\toprule
\textbf{Noise type} & \textbf{Distribution} & \textbf{Magnitude} & \textbf{Target simulation phenomenon} \\
\midrule
Vertical jitter & Gaussian & 2.0 cm & Gait-induced oscillations and lifting shifts \\
Velocity noise & Uniform & 0.1 m/s & Non-uniform walking speed and intent changes \\
Yaw perturbation & Gaussian & 3.0 deg & Subtle directional adjustments during transport \\
\bottomrule
\end{tabular}
\end{table}

\begin{table}[h]
\centering
\caption{Robot-script training hyperparameters.}
\label{tab:script_train_spec}
\small
\renewcommand{\arraystretch}{1.2}
\begin{tabular}{lc|lc}
\toprule
\textbf{Optimization parameter} & \textbf{Value} & \textbf{Architecture parameter} & \textbf{Value} \\
\midrule
Learning rate ($\alpha$) & $1.0 \times 10^{-4}$ & Hidden layers (actor/critic) & [256, 256, 128] \\
Epochs per update & 10 & Activation function & ReLU \\
Minibatch count & 16 & Layer initialization & Orthogonal \\
Entropy coefficient & 0.01 & Optimizer & Adam \\
Discount factor ($\gamma$) & 0.99 & Weight decay ($L_2$) & $1.0 \times 10^{-4}$ \\
Learning rate schedule & Cosine annealing & Total RL action steps & $2.0 \times 10^9$ \\
\bottomrule
\end{tabular}
\end{table}

The policy is trained for $2.0 \times 10^9$ steps, using the IPPO algorithm. We employ orthogonal initialization and the Adam optimizer with a cosine annealing learning rate schedule to maintain consistency with our HALO framework. To simulate gait-induced oscillations and intentional shifts, multi-axis stochastic noise is injected into the human proxy's motion. These perturbations, detailed in Table~\ref{tab:script_noise}, force the robot to implicitly compensate for interaction residuals through individual proprioceptive feedback. The comprehensive training and optimization hyperparameters are provided in Table~\ref{tab:script_train_spec}.

\end{document}